\documentclass{article}

\usepackage[preprint]{jmlr2e}

\usepackage{natbib}
\usepackage{amsmath}
\usepackage{amssymb}
\usepackage{cleveref}
\usepackage{amsthm}
\usepackage{subcaption}
\usepackage[T1]{fontenc}
\usepackage[parfill]{parskip}

\theoremstyle{plain}
\newtheorem{theorem}{Theorem}[section]

\newtheorem{lemma}[theorem]{Lemma}
\newtheorem{corollary}[theorem]{Corollary}
\theoremstyle{definition}
\newtheorem{definition}[theorem]{Definition}

\theoremstyle{remark}
\newtheorem{remark}[theorem]{Remark}
\newtheorem*{theorem*}{\textbf{Theorem}}
\newtheorem*{lemma*}{\textbf{Lemma}}
\newtheorem*{corollary*}{\textbf{Corollary}}
\newtheorem*{definition*}{\textbf{Definition}}
\usepackage{tikz}

\title{When Do Off-Policy and On-Policy Policy Gradient Methods Align?}

\author{%
Davide Mambelli \thanks{Correspondence: \texttt{d.mambelli@tudelft.nl}} \hspace{.2mm}\affnum{ 1,2} \hspace{1mm}
Stephan Bongers  \hspace{.2mm}\affnum{1,2}\\
Onno Zoeter \hspace{.2mm}\affnum{2} \hspace{1mm}
Matthijs T.J. Spaan \hspace{.2mm}\affnum{1} \hspace{1mm}
Frans A. Oliehoek \hspace{.2mm}\affnum{1} \\\\
\affiliation{1}{Delft University of Technology, The Netherlands}\\ 
\affiliation{2}{Booking.com, The Netherlands}\\
}

\begin{document}

\maketitle

\begin{abstract}
  Policy gradient methods are widely adopted reinforcement learning algorithms for tasks with continuous action spaces. These methods succeeded in many application domains, however, because of their notorious sample inefficiency their use remains limited to problems where fast and accurate simulations are available. A common way to improve sample efficiency is to modify their objective function to be computable from off-policy samples without importance sampling.
  A well-established off-policy objective is the excursion objective. This work studies the difference between the excursion objective and the traditional on-policy objective, which we refer to as the on-off gap. We provide the first theoretical analysis showing conditions to reduce the on-off gap while establishing empirical evidence of shortfalls arising when these conditions are not met.
\end{abstract}

\section{Introduction}
Policy gradient methods represent a large and popular class of reinforcement learning algorithms \citep{schulman2015, schulman2017} to tackle the set of applications with continuous action spaces, where methods such as Q-learning cannot be directly applied. Robotics \citep{peters2006}, emergent tool usage \citep{baker2020}, and games \citep{openai2019dota} are notable examples of successful applications for these methods. However, policy gradient methods are also notoriously sample-inefficient as they require collecting a large amount of on-policy experience after each parameter update \citep{gu2017}. This is the reason why they are most widely employed in applications where many instances of fast simulators are available.

To go beyond these simulation-based applications and towards real-world deployment, it is necessary to exploit the collected experience efficiently. One solution is to improve sample efficiency by leveraging all available experience. This translates into using off-policy samples, which, for instance, might be gathered by dedicated safe-to-deploy data collection policies, which we will refer to as behavioral policies. In the most general setting, off-policy learning aims at learning under arbitrary behavioral policies. A notable use case consists in learning from a mixture of on- and off-policy samples for example by using experience replay mechanisms \citep{lin1992}, hence avoiding discarding the experience collected when a policy gets updated.

Off-policy policy gradient (OPPG) methods typically rely on a modified version of the on-policy objective function \citep{degris2012}, known as the excursion objective \citep{ghiassian2018, zhang2019}, that can be estimated from off-policy samples avoiding the use of importance sampling techniques, which are known to lead to high variance \citep{liu2018}. However, in general settings, there is no clear understanding of the mismatch between the two objectives. 

The on-policy objective estimates returns achieved by a policy when deployed in the environment, while the off-policy objective, in general, does not describe actual returns achievable in the environment. For this reason, optimizing for the excursion objective does not provide any guarantees of obtaining good performances at deployment time.
\begin{figure}[t]
    \centering
    \begin{tikzpicture}[scale=.7]
        \newcommand\XOFF{-1.5}
        \newcommand\XON{1}
        \draw[samples=50, blue] plot[domain=-1.5:3] ({\x+\XOFF},{2*exp(-1.5*\x*\x)});
        \draw[samples=50, red] plot[domain=-2:2] ({\x+\XON},{1.6*exp(-2*\x*\x)});
        \draw[->] (-3,0) -- node[at end,left,inner sep=2pt] {$J$} (-3,2.7);
        \draw[->] (-3,0) -- node[at end,below,inner sep=3pt] {$\pi$} (3,0);
        \node[blue, above] at (\XOFF -0.5,2) {$J_\text{off}$};
        \node[red, above] at (\XON+0.5,1.6) {$J_\text{on}$};
        \draw[red, dashed] (\XON,0) -- (\XON,1.6);
        \node[below,red] at (\XON,0) {$\pi^*_\text{on}$};
        \node[below,blue] at (\XOFF,0) {$\pi^*_{\text{off}}$};
        \draw[-, dashed, blue] (\XOFF,0) -- (\XOFF,2);
        \node[below,black] at (-0.4,3.7) {$\gamma = 0.9$};
        \draw[<->] (\XOFF,1) -- node[above,inner sep=2pt] {$\Delta$} (\XON,1);
    \end{tikzpicture}
    \begin{tikzpicture}[scale=.7]
        \newcommand\XOFF{-1.3}
        \newcommand\XON{0.5}
        \draw[samples=100, blue] plot[domain=-1.7:3] ({\x+\XOFF},{2*exp(-1.5*\x*\x)});
        \draw[samples=100, red] plot[domain=-2:2.5] ({\x+\XON},{1.6*exp(-2*\x*\x)});
        \draw[->] (-3,0) -- node[at end,left,inner sep=2pt] {$J$} (-3,2.7);
        \draw[->] (-3,0) -- node[at end,below,inner sep=3pt] {$\pi$} (3,0);
        \node[blue, above] at (\XOFF -0.5,2) {$J_\text{off}$};
        \node[red, above] at (\XON+0.5,1.6) {$J_\text{on}$};
        \draw[red, dashed] (\XON,0) -- (\XON,1.6);
        \node[below,red] at (\XON,0) {$\pi^*_\text{on}$};
        \node[below,blue] at (\XOFF,0) {$\pi^*_{\text{off}}$};
        \draw[-, dashed, blue] (\XOFF,0) -- (\XOFF,2);
        \node[below,black] at (-0.5,3.7) {$\gamma = 0.99$};
        \draw[<->] (\XOFF,1) -- node[above,inner sep=2pt] {$\Delta$} (\XON,1);
    \end{tikzpicture}
    \caption{Stylized visualization On-Off Gap vs.\ $\gamma$.}
    \label{fig:overview}
\end{figure}
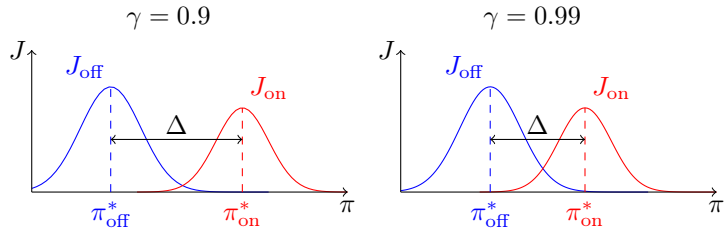

The main contribution of this work is to establish a connection between on- and off-policy gradient methods, by looking at the relation between the objectives they optimize for. In particular, we illustrate the impact of the discount factor on the gap between these two objectives, as illustrated in Figure~\ref{fig:overview}. Specifically, when the discount factor is selected close enough to 1 and the Markov chain is both irreducible and aperiodic we establish that the two objectives and their gradient coincide. We also establish the first upper bound on the norm distance between the on- and off-policy gradients. 
In the end, we empirically verify the impact of the on-off gap in offline policy selection tasks for realistic environments. Our analysis provides theoretical foundations for the use of the excursion objective, improving the understanding of how far what we can compute is from what we want to optimize in OPPG methods.

\section{Background}
In this work, we consider a Markov Decision Process (MDP) $(\mathcal{S},\mathcal{A},T,r,\mu)$ where $\mathcal{S}$ is a finite set of states, $\mathcal{A}$ is the action space, the one-step transition probability is denoted by $T: \mathcal{S}\times\mathcal{A}\rightarrow\Delta(\mathcal{S})$, where $\Delta(\mathcal{S})$ is the space of probability distributions over $\mathcal{S}$, $r: \mathcal{S}\times\mathcal{A}\rightarrow[0,1]$ describes the reward function, and $\mu\in\Delta(\mathcal{S})$ is the initial state distribution \citep{putterman1994}. The discount factor~$\gamma$ is not mentioned in the MDP formulation as we look into different optimality criteria and hence we follow the definition in \citet{putterman1994} by separating the sequential decision process from the optimality criterion.

\subsection{Discounted reward}
Given a discount factor $\gamma\in[0,1)$, for a policy $\pi:\mathcal{S}\rightarrow\Delta(\mathcal{A})$, the value function $V_\pi: \mathcal{S}\rightarrow \mathbb{R}$ is defined as:
\begin{align}
    \label{eq:value_discounted_def}
    V_\pi(s) := \mathbb{E}_{\,\pi, T}\left[\sum_{t=0}^{\infty}\gamma^t r(s_t,a_t) \;\Big|\; s_0=s\right]\,.
\end{align}
Likewise, the action-value function $Q_\pi:\mathcal{S}\times\mathcal{A}\rightarrow\mathbb{R}$ is defined as:
\begin{align}
    Q_\pi(s,a) := \mathbb{E}_{\,\pi, T}\left[\sum_{t=0}^{\infty}\gamma^t r(s_t,a_t) \;\Big|\; s_0=s,a_0=a\right]\,.
\end{align}
Since $r$ is bounded between 0 and 1 both the value and the action value are non-negative and upper bounded by $1/(1-\gamma)$. A core quantity for our work is the state visitation distribution $d_\pi(s)$, which, for discounted rewards, is written as:
\begin{align}
    \label{eq:d_discounted_def}
    d_\pi(s) := (1-\gamma)\sum_{t=0}^{\infty} \gamma^t Pr(s_t = s | s_0\sim\mu,\pi, T)\,.
\end{align}
Similarly to previous definitions, given that the probability $Pr(s_t = s | s_0\sim\mu,\pi, T)$ is trivially upper-bounded by 1 and $\gamma\in[0,1)$ the state visitation always exists and lies in the range $[0,1]$ for all states. Finally, the objective function, describing how a policy performs when selecting actions in the MDP, can be  formulated as:
\begin{align}
    \label{eq:rho_discounted_def}
    \rho(\pi) := \mathbb{E}_{s \sim \mu}\left[ V_\pi(s)\right]\,.
\end{align}
The objective function describes the performances of a policy $\pi$ when it is deployed in the MDP, where the starting state is sampled from the distribution $\mu$.

\subsection{Policy Gradient Methods}
Policy gradient (PG) methods aim to optimize a parametric policy $\pi(a\mid s,\theta)$ to maximize the previously introduced objective~$\rho(\pi)$. Specifically, they compute the gradient of the objective with respect to the policy parameters. We refer to $\rho(\pi)$ as the on-policy objective function and to highlight its on-policy nature we rename it as:
\begin{align}
    J_\mu(\pi) := \rho(\pi)\,,
\end{align}
where the subscript represents the starting state distribution, whereas the argument indicates the policy to evaluate. \citet{sutton1999} established a unified expression for the gradient for stochastic policies:
\begin{align}
    \label{eq:grad_without_logtrick}
    \nabla_\theta J_\mu(\pi) = \sum_s d_\pi  (s) \sum_a Q_\pi(s,a)\nabla_\theta\pi(a|s,\theta) \,.
\end{align}
This expression can be turned into an ``easy-to-estimate'' equivalent one by using the log-trick:
\begin{align}
    \nabla_\theta J_\mu(\pi) = \mathbb{E}_{s\sim d_\pi , \; a \sim \pi} \left[ Q_\pi(s,a)\nabla_\theta\log\pi(a|s,\theta) \right] \,.
\end{align}
In the off-policy setting, experience is gathered independently from $\pi$ following a behavioral (or logging) policy $b$. Hence, the state variable in the available experience is distributed such that $s\sim d_b$. For this reason, the previous expectation cannot be directly estimated from behavioral data. To circumvent this issue, it is possible either to use importance sampling or modify the objective to remove the dependency on $d_\pi $. Note, that in general such a dataset can be collected by multiple policies without specific restrictions on them. In other words, in principle, $d_b$ can be any state distribution. However, to match assumptions by \citet{imani2018} we assume that $d_b$ is the state visitation of a single behavioral policy $b$. This instance of off-policy learning is often referred to as offline because $\pi$ is not used for any online interaction. In this setting a modified objective $J_{d_b}(\pi)$, referred to as the \textit{excursion objective}, has been introduced by \citet{degris2012} dropping the dependency on $d_\pi $:
\begin{align}
    J_{d_b}(\pi) := \mathbb{E}_{s \sim d_b}\left[ V_\pi(s)\right]\,.
\end{align}
The difference between the two objectives $J_\mu$ and $J_{d_b}$ lies in the starting state distribution, and hence the weighting for the value functions at different states. \\
A simple gradient expression for the excursion objective has been at first established by \citet{degris2012} for tabular policy representations\footnote{See Errata by \citet{degris2012} for final theorem statement.}. Several methods were built based on this work such as DPG \citep{silver2014,lillicrap2016}, ACER \citep{wang2017}, and TD3 \citep{fujimoto2018}. These methods are based on semi-gradients as they rely only on an approximation of the true gradient for arbitrary policy representations. \citet{imani2018} showed that previously introduced semi-gradients produce sub-optimal policies and established the gradient expression for any policy representation leveraging emphatic weighting \citep{sutton2016}:
\begin{align}
    \label{eq:imani_grad}
    \nabla_\theta J_{d_b} = \sum_s m(s)\sum_a Q_\pi(s,a)\nabla_\theta\pi(a|s,\theta) \,.
\end{align}
Notably, $m = (\mathbb{I}-\gamma P_\pi)^{-1}\;l$, where $P_\pi$ is the square matrix such that each element is given by $P_\pi(s',s):=\sum_a T(s'|s,a)\pi(a|s)$, and $l\in \mathbb{R}^{|\mathcal{S}|}$ has entries $l(s)=d_b(s)i(s)$ such that $i:\mathcal{S}\rightarrow[0,\infty)$ is an arbitrary state-dependent interest function.\\
Importantly, in practice, Equation~\ref{eq:imani_grad}, contrary to previously established semi-gradients, cannot be straightforwardly estimated from samples because it relies on knowing $P_\pi$ and therefore still lacks applications in reinforcement learning problems. However, the excursion objective remains widely used and its exact gradient retains theoretical interest. 

\section{Problem Formulation and Related Works}
Even though the excursion objective is a well-established element in the policy gradient literature, there is a limited understanding of how well it aligns with its on-policy counterpart, which we refer to as the on-off gap. In the following, we define the problem formally and provide an example showcasing when such an issue arises. \\
Tabular policy representations are known to be complete, meaning that there is a policy $\pi^*$ that simultaneously maximizes $V_\pi(s)$ for all states \citep{kakade2002}. Therefore, $\pi^*$ is optimal under any initial state distribution which implies that when defining a generalized objective function as
\begin{align}
    J_{\nu}(\pi)=\mathbb{E}_{s\sim \nu}[V_\pi(s)],
\end{align}
the state distribution $\nu\in\Delta(\mathcal{S})$ used in the objective function does not affect the optimal policy. However, when restricting the policy class, as it happens for function approximation, objectives with different state distributions might lead to different optimal policies. An example is provided in Figure~\ref{fig:impact_function_approx}, where it is shown that optimizing $J_{\nu}(\pi)$ under different initial state distributions can lead to different optimal policies as long as the policy class is not complete in that it contains the policy that is optimal for all starting states. In particular, the excursion objective might not reflect at all the actual performances of a policy $\pi$ as $d_b$ might be arbitrarily different from $\mu$, causing the optimal policies for the two objectives to not coincide.
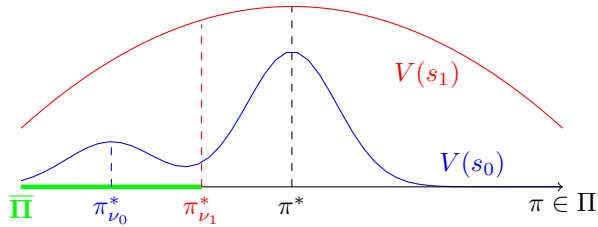
\begin{figure}
    \centering
    \begin{tikzpicture}[scale=1.2]
        \draw[samples=50, blue] plot[domain=-3:3] ({\x},{1.5*exp(-2*\x*\x) + 0.5 * exp(-2*(\x+2)*(\x+2)});
        \draw[samples=50, red] plot[domain=-3:3] ({\x},{0.15*-\x*\x + 2});
        \draw[->] (-3,0) -- node[at end,below,inner sep=2pt] {$\pi\in\Pi$} (3,0);
        \node[blue, above] at (2,0) {$V(s_0)$};
        \node[red, below] at (1.5,1.5) {$V(s_1)$};
        \node[below] at (0,0) {$\pi^*$};
        \draw[-, dashed] (0,0) -- (0,2);
        \draw[-, green, line width=0.6mm] (-1,0) -- node[at end,below,inner sep=2pt] {$\boldsymbol{\overline{\Pi}}$} (-3,0);
        \node[below,blue] at (-2,0) {$\pi^*_{\nu_0}$};
        \draw[-, dashed, blue] (-2,0) -- (-2,0.5);
        \node[below,red] at (-1,0) {$\pi^*_{\nu_1}$};
        \draw[-, dashed, red] (-1,0) -- (-1,1.8);
    \end{tikzpicture}
    \caption{Considering a two-state MDP, we draw $V_\pi(s)$ for each state as a function of the policy $\pi\in\Pi$. Choosing a distribution $\nu\in\Delta(\mathcal{S})$ for the objective $J_{\nu}(\pi)=\mathbb{E}_{s\sim \nu}[V_\pi(s)]$ corresponds to selecting a mixing for the values. We refer to the policy that simultaneously maximizes the value at all states as $\pi^*$. When $\pi^*$ is within the function class, for any distribution $\nu$, we have $\pi^* = \pi^*_{\nu}$, where $\pi^*_{\nu}=\text{argmax}_{\pi\in\Pi}J_{\nu}(\pi)$. Otherwise, different initial state distributions might lead to different optima. For instance, in the example above by selecting two distributions $\nu_0(s_0)=1$ and $\nu_1(s_1)=1$ and restricting the argmax to be over the function class to $\overline{\Pi}\subset\Pi$ (represented by the green line), we have two distinct policies that maximize their respective objective function, $\pi^*_{\nu_0}\neq\pi^*_{\nu_1}$.}
    \label{fig:impact_function_approx}
\end{figure}

The first work looking into conditions for the two objectives to retrieve the same optimal policy is by \citet{laroche2021}. These authors study a generalized version of the policy gradient update $U(\theta, d)$ under arbitrary state distribution $d$:
\begin{align}
    \theta_{t+1}\leftarrow \theta_t + \eta_t U(\theta_t, d_t)
\end{align}
where $U(\theta, d) = \sum_s d(s) \sum_a Q_\pi(s,a)\nabla_\theta\pi(a|s,\theta)$ is a generalization of Equation~\ref{eq:grad_without_logtrick}. In particular, $U(\theta, d)$ represents a parameter update for policy parametrized by $\theta$ given that in the available experience $s\sim d$. For direct and softmax policy parametrization applying exact policy updates induces a sequence of upper bounded and monotonically increasing value functions. This guarantees the convergence of the value function to a fixed point. Moreover, they show that the condition $\forall s \in \mathcal{S}, \; \sum_t\eta_t d_t(s) = \infty$ is necessary and sufficient for the fixed point to be optimal. The impact of this work on OPPG methods is to provide conditions on the behavioral policy such that the updates still converge to the optimal policy. This complements our results as we look for similar guarantees but instead by using conditions on $\pi$, the policy we are trying to improve upon. The only assumption on the behavioral policy we use is $\pi(a|s)>0\rightarrow b(a|s)>0$ to ensure that the off-policy evaluation problem is proper \citep{hallak2017}.

\section{State Visitation and Stationarity}
Our work on connecting on- and off-policy objectives hinges upon establishing a link between state visitation and the stationary properties of the Markov chain that emerges from applying a policy to an MDP. In the following, we discuss how this connection naturally appears when the optimality criterion considers average rewards. However, state visitation is defined instrumentally to the value function, and consequently, different optimality criteria come with different state visitation definitions \citep{sutton1999}. The excursion objective has been introduced only for discounted rewards, where the decaying weights prevent us from directly drawing the connection we are interested in. For this reason, we establish assumptions under which we can restore the link between the state visitation and stationary properties of the Markov chain.

From here onwards, we will refer to the finite-states Markov chain $P_\pi\in \mathbb{R}^{|\mathcal{S}|\times|\mathcal{S}|}$ with entries $P_{\pi}(s',s)=\int_{\mathcal{A}} \pi_\theta(a|s)T(s'|s,a) da$. We present our results for a continuous space since this is a common use case for PG methods.

\subsection{Average Reward}
In the average reward setting \citep[Chapter 8]{putterman1994} the state visitation distribution is defined as 
\begin{align}
    \label{eq:d_average_def}
    d_\pi^{a}(s) := \lim_{t\rightarrow\infty} Pr(s_t=s| s_0\sim\mu,\pi, T)\,,
\end{align}
and, for an MDP with finite states, it can be rewritten as \citep{sutton1999}:
\begin{align}
    \label{eq:d_def_ave}
    d_\pi^{a} = \lim_{t\rightarrow\infty} (P_\pi)^t \mu \,.
\end{align}
Equation~\ref{eq:d_def_ave} is also the definition of the limiting distribution of the Markov chain induced by applying $\pi$ to the MDP. Crucially, the limiting distribution for aperiodic\footnote{A state $s$ such that $P_\pi(s,s)>0$ is said to be aperiodic. A chain is said to be aperiodic if all states are aperiodic.} chains always exists \citep[Section A.4]{putterman1994}. Any distribution $d\in\Delta(\mathcal{S})$ that satisfies the steady state property:
\begin{align}
    \label{eq:stationarity_def}
    d = P_\pi d \, ,
\end{align}
is said to be a stationary distribution for the Markov chain defined by $P_\pi$. This property can be used to compute all the distributions that the system might converge to, by considering the corresponding system of $|\mathcal{S}|$ unknowns with $|\mathcal{S}|$ equations plus the additional equation $\sum_s d(s) = 1$. Following its definition, the limiting distribution is always a stationary distribution. However, in general, a stationary distribution is not always a limiting distribution since different starting states might lead the system to converge to different stationary distributions. If $P_\pi$ is irreducible\footnote{The state-transition graph is strongly connected,  (i.e. from every state it is possible to reach any other state).}, positive recurrent\footnote{If $s_t=u$ define $\tau_u=\min(\hat{t}|s_{t+\hat{t}}=u)$, then a state $u$ is positive recurrent if $E[\tau_u|s_0=u]<\infty$. A chain is said to be positive recurrent if every state is positive recurrent.} and aperiodic then $d = P_\pi d$ has a unique solution. However, if $\mathcal{S}$ is finite and $P_\pi$ is irreducible then all states are positive recurrent \citep{putterman1994}. We summarize this in the following:
\begin{remark}
If $\mathcal{S}$ is finite, $P_\pi$ is irreducible and aperiodic, then the Markov chain has a unique stationary distribution.
\end{remark}
If the stationary distribution is unique, then for any state distributions $\mu$ and $\nu$ it holds that:
\begin{align}
    \lim_{t\rightarrow\infty} (P_\pi)^t \mu = \lim_{t\rightarrow\infty} (P_\pi)^t \nu \,.
\end{align}
For this reason, in the average reward formulation, by assuming irreducibility, we can state that the state visitation distribution, by being a limiting distribution, must also be the unique stationary distribution of the Markov chain.

\subsection{Discounted Reward}
We now analyze whether there exists a link between state visitation distribution and the stationary properties of the underlying Markov chain also when dealing with discounted rewards. Under this optimality criterion, there is, in general, no direct connection between state visitation and limiting distribution. In fact, states affect exponentially less over time the value function, hence, the rewards obtained at states belonging to the limiting distribution, that are visited infinitely often, affect less and less the value. This weighting in the state visitation definition, see Equation~\ref{eq:d_discounted_def}, prevents drawing connections with stationary distributions. We identify conditions such that the state visitation distribution can still be approximated with the limiting distribution of the Markov chain.\\
For finite states, Equation~\ref{eq:d_discounted_def} can be written as:
\begin{align}
    d_\pi^{d} = \sum_{t=0}^{\infty}\gamma^t (P_\pi)^t \mu \, ,
\end{align}
where $\sum_{t=0}^{\infty} \gamma^t (P_\pi)^t$ is guaranteed to converge since $\mathbb{I}-\gamma P_\pi$ is full rank (Theorem 6.1.1 in \citet{putterman1994}). \\
The key insight for our first result is that the relation between $\gamma$ and the convergence speed to the steady-state behavior of the chain affects the overlap between the states visited before the effective horizon $(1-\gamma)^{-1}$ and the start of the steady-state behavior. Thus, the relation between the state visitation and the stationary distributions is controlled by $\gamma$.\\
We start by defining the strong stationary time \citep{LevinPeresWilmer2006} which is crucial in our analysis.
\begin{definition} (Strong stationary time) \\
    In a finite-state Markov chain with transition probabilities described by $P_\pi\in\mathbb{R}^{|\mathcal{S}|\times|\mathcal{S}|}$ and starting distribution $\mu\in\Delta(\mathcal{S})$, we call $T_m$ the strong stationary time if it is the smallest $t$ such that $(P_\pi)^t\mu = (P_\pi)^{t+1}\mu$. For strong stationary time $T_m$ we define $P_\pi^* := (P_\pi)^{T_m}$.
\end{definition}
For finite-state Markov chains, \citet[Proposition 1.10 b]{diaconis1990} proved that a finite strong stationary time always exists. We will extensively use this result throughout this work, which is the main reason for assuming a finite state space $\mathcal{S}$. The strong stationary time acts as a cutoff point between the transient and limiting behaviors. The interplay between $T_m$ and $\gamma$ is what determines if the state visitation distribution is mostly affected by the limiting distribution or by the transitory patterns. In fact, in the following theorem, we show that, for $\gamma$ close to 1, we can approximate the state visitation for the discounted reward setting by the limiting distribution. We highlight that similar results to Lemma~\ref{th:disc_limit_distr} can be found in the literature \citep[Corollary 8.2.5]{putterman1994}, we include our derivation and theorem statement for convenience. 
\begin{lemma}
    \label{th:disc_limit_distr}
    %(Discounted state visitation in the $\gamma$-limit)\\
    For an MDP with finite states and starting distribution $\mu\in\Delta(\mathcal{S})$, it holds that 
    \begin{align}
        \label{eq:th1}
        \lim_{\gamma\rightarrow1}(1-\gamma)\sum_{t=0}^{\infty} \gamma^t (P_\pi)^t \mu = \lim_{t\rightarrow\infty} (P_\pi)^t\mu
    \end{align}
\end{lemma}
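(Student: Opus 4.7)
The plan is to split the infinite sum on the left-hand side at the strong stationary time $T_m$, which by Diaconis--Fill is guaranteed to be finite for a finite-state chain. Once past $T_m$, the iterates $(P_\pi)^t\mu$ are constant, equal to $P_\pi^*\mu$, which is exactly the limit on the right-hand side. This reduces the problem to showing that (i) a finite head of the series, when rescaled by $(1-\gamma)$, vanishes as $\gamma\to 1$, and (ii) the constant tail, when rescaled by $(1-\gamma)$, contributes exactly $P_\pi^*\mu$ in the limit.

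Concretely, first I would invoke the definition of $T_m$ to write
\begin{align*}
    (1-\gamma)\sum_{t=0}^{\infty}\gamma^t (P_\pi)^t\mu
    \;=\; (1-\gamma)\sum_{t=0}^{T_m-1}\gamma^t (P_\pi)^t\mu
    \;+\; (1-\gamma)\sum_{t=T_m}^{\infty}\gamma^t P_\pi^*\mu,
\end{align*}
using $(P_\pi)^t\mu = P_\pi^*\mu$ for all $t\geq T_m$. The first term is a finite sum of vectors each of $\ell_1$-norm $1$, so it is uniformly bounded and the prefactor $(1-\gamma)$ drives it to $0$ as $\gamma\to 1$. For the second term, I would sum the geometric series to obtain $(1-\gamma)\cdot\frac{\gamma^{T_m}}{1-\gamma}\,P_\pi^*\mu = \gamma^{T_m}\,P_\pi^*\mu$, which tends to $P_\pi^*\mu$ as $\gamma\to 1$ since $T_m$ is finite.

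Finally, for the right-hand side of \eqref{eq:th1}, the same identity $(P_\pi)^t\mu = P_\pi^*\mu$ for $t\geq T_m$ shows that $\lim_{t\to\infty}(P_\pi)^t\mu = P_\pi^*\mu$ exists and matches the limit computed from the left-hand side, which closes the argument.

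I do not anticipate a serious obstacle here: the whole proof is essentially a geometric-series computation once one observes that $T_m<\infty$ collapses the tail to a constant vector. The only subtlety worth flagging is that the statement implicitly assumes the right-hand limit exists, which is not automatic for a general Markov chain but does follow from the authors' definition of $T_m$ via the Diaconis--Fill result; I would state this dependence explicitly at the start of the proof so the reader sees that finiteness of $\mathcal{S}$ is doing the real work.
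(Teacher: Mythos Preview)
Your proposal is correct and follows essentially the same route as the paper: split the series at the strong stationary time $T_m$, observe that the finite head scaled by $(1-\gamma)$ vanishes, and compute the geometric tail to get $\gamma^{T_m}P_\pi^*\mu\to P_\pi^*\mu$. Your explicit remark that the existence of the right-hand limit is itself a consequence of $T_m<\infty$ (via Diaconis--Fill) is a clarifying addition that the paper's proof leaves implicit.
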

The complete proof for this and all the other theoretical results are provided in the supplementary materials. Intuitively, the infinite discounted sum can be split into two sums at $T_m$. The weight of timesteps before $T_m$ goes to zero as $\gamma\rightarrow 1$. Whereas the states for timesteps after $T_m$ are all sampled from the stationary distribution, and therefore since any mixing of the same distribution is the distribution itself, the state visitation is the same as the stationary distribution of the Markov chain. In other words, by the definition of limit, Lemma~\ref{th:disc_limit_distr} let us state that for any $\epsilon>0$ there exists a $\delta>0$ such that if $|\gamma - 1|<\delta$ then:
\begin{align}
    \label{eq:th_1_with_lim_def}
    \left\lVert (1-\gamma)\sum_{t=0}^{\infty} \gamma^t (P_\pi)^t \mu - P_\pi^* \mu \right\rVert < \epsilon \,.
\end{align}
This implies that the closer the discount factor $\gamma$ is to 1, and the shorter the strong stationary time $T_m$ is the more accurate the approximation is going to be. From an intuitive perspective, we are interested in settings where $\gamma\approx1$ and with a short strong stationary time. Hence, almost all states visited before the effective horizon come from the limiting distribution of the chain. Importantly, $\epsilon$ in Equation~\ref{eq:th_1_with_lim_def} depends on $T_m$. The faster the Markov chain reaches the stationary distribution, the smaller $\gamma$ can be 
while achieving the same $\epsilon$-closeness in Equation~\ref{eq:th_1_with_lim_def}.\\ 
The LHS of Equation~\ref{eq:th1} corresponds to Equation~\ref{eq:d_def_ave}, consequently Lemma~\ref{th:disc_limit_distr} connects the definitions of state visitation for the average and discounted reward settings. It follows that, as we show in the following result, Lemma~\ref{th:disc_limit_distr} allows us to remove the dependence on the initial state distribution when $P_\pi$ is irreducible. 
\begin{corollary}
    \label{lemma:s0_indep}
    For an MDP with finite states, if the Markov chain $P_\pi\in \mathbb{R}^{|\mathcal{S}|\times|\mathcal{S}|}$ is irreducible and aperiodic, given any state distributions $\mu\in\Delta(\mathcal{S})$ and $\nu\in\Delta(\mathcal{S})$, it holds that
    \begin{align}
        \lim_{\gamma\rightarrow1}(1-\gamma)\sum_{t=0}^{\infty} \gamma^t (P_\pi)^t \mu = \lim_{t\rightarrow\infty} (P_\pi)^t \nu \,.
    \end{align}
\end{corollary}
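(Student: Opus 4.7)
The plan is to chain together Lemma~\ref{th:disc_limit_distr} with the uniqueness of the stationary distribution under the irreducibility and aperiodicity assumptions, which is exactly what the Remark in the average reward subsection gives us. Since Lemma~\ref{th:disc_limit_distr} already equates the LHS with $\lim_{t\to\infty}(P_\pi)^t\mu$, the only real work left is to argue that this limit does not depend on the starting distribution when $P_\pi$ is irreducible and aperiodic on a finite state space.

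First I would apply Lemma~\ref{th:disc_limit_distr} directly, which gives
\begin{align*}
\lim_{\gamma\rightarrow1}(1-\gamma)\sum_{t=0}^{\infty} \gamma^t (P_\pi)^t \mu = \lim_{t\rightarrow\infty} (P_\pi)^t\mu.
\end{align*}
Note that the lemma does not require any assumption on $P_\pi$ beyond being a transition matrix on finitely many states, so it applies without invoking irreducibility or aperiodicity.

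Second, under the additional assumptions of this corollary, the Remark says that $P_\pi$ has a unique stationary distribution, call it $d^*$. Because $P_\pi$ is aperiodic and irreducible on a finite state space, the limit $\lim_{t\to\infty}(P_\pi)^t \rho$ exists for every initial distribution $\rho\in\Delta(\mathcal{S})$, and the limiting vector is a stationary distribution of $P_\pi$. By uniqueness of the stationary distribution, the limit must equal $d^*$ regardless of whether we start from $\mu$ or from $\nu$, so
\begin{align*}
\lim_{t\to\infty}(P_\pi)^t\mu \;=\; d^* \;=\; \lim_{t\to\infty}(P_\pi)^t\nu.
\end{align*}
Combining this with the first step yields the claimed equality.

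There is really no serious obstacle here; the content is essentially ``Lemma~\ref{th:disc_limit_distr} plus uniqueness of the stationary distribution''. The only subtlety I would double-check is that the limit on the right-hand side is indeed guaranteed to exist for arbitrary $\nu$ (not just a stationary $\nu$) — but this is exactly where aperiodicity is used, since for finite-state irreducible aperiodic chains the powers $(P_\pi)^t$ converge to the rank-one matrix whose columns are all $d^*$, so $(P_\pi)^t\nu\to d^*$ for every probability distribution $\nu$. With that observation in place the proof is a two-line invocation of earlier results.
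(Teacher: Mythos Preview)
Your proposal is correct and follows essentially the same approach as the paper: apply Lemma~\ref{th:disc_limit_distr} to identify the LHS with $\lim_{t\to\infty}(P_\pi)^t\mu$, and then invoke irreducibility and aperiodicity to argue this limit is independent of the initial distribution. The paper phrases the second step via the strong stationary time (writing the limit as $P_\pi^*\mu$ and noting $P_\pi^*$ has identical columns), while you phrase it via convergence of $(P_\pi)^t$ to the rank-one matrix with columns $d^*$; these are the same fact.
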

The proof is centered on the use of Lemma~\ref{th:disc_limit_distr} to approximate the LHS with a stationary distribution and then using the uniqueness of the stationary distribution to change the starting state distribution.

The results in this section let us connect the definition of state visitation distribution with the limiting distribution of the Markov chain $P_\pi$. As previously discussed, under the irreducibility and aperiodicity of the Markov chain, there is also an equivalence between the stationary and limiting distribution. To conclude, in this section, we connected the state visitation in the discounted reward formulation to the limiting and stationary distributions.

\section{When Do the Objectives Coincide?}
\label{sec:obj_coincide}
Up to now, the excursion objective has been appealing because of its independence from $d_\pi$, which avoids introducing correction factors coming from importance sampling literature \cite{liu2019}, or solving saddle-point optimization problems \cite{nachum2019}. The main goal of this work is to get further insights into the excursion objective understanding when it aligns with its on-policy counterpart, providing conditions for it to describe the performances of a policy when deployed in the real world. If the two objectives can be interchangeably optimized leading to the same final optimal policy, then this ensures equivalent performances between the on- and off-policy setups.

Our analysis is centered around the effect of the discount factor on the choice of the objective function, which, similarly to the value function, scales proportionally with the effective horizon $(1-\gamma)^{-1}$. For this reason, we work only with normalized objectives $J_\nu(\pi) = (1-\gamma)\mathbb{E}_{s\sim \nu}[V_\pi(s)]$ which are bounded between 0 and 1 for any $\nu\in\Delta(\mathcal{S})$. This allows us to study the impact of the discount factor independently on the scale of the objective function. Finally, we work with normalized emphatic weights which are achieved simply by selecting the interest function equal to $1-\gamma$ for all states.

All the following results build on Lemma~\ref{th:disc_limit_distr} and the interplay between the state visitation and the stationary distribution of the Markov chain. To better understand the on-off gap, we first look into the difference between the two objectives which let us derive the following result.
\begin{theorem}
    \label{th:equiv_value}
    For an MDP with finite states, if the Markov chain $P_\pi\in \mathbb{R}^{|\mathcal{S}|\times|\mathcal{S}|}$ is irreducible and aperiodic then for $\gamma\rightarrow1$ it holds that
    \begin{align}
        J_{d_b}(\pi) = J_\mu(\pi) \,,
    \end{align}
    for $J_{d_b}$ and $J_\mu$ the normalized objectives.
\end{theorem}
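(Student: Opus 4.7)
The plan is to express the normalized objective purely in terms of the discounted state visitation distribution, so that Corollary~\ref{lemma:s0_indep} applies directly.

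First I would rewrite each normalized objective as an expected one-step reward under the visitation measure. Expanding $V_\pi$ in $J_\nu(\pi) = (1-\gamma)\mathbb{E}_{s\sim\nu}[V_\pi(s)]$, swapping summations (justified by boundedness of $r$ and geometric decay of $\gamma^t$), and using the definition of $d_\pi$ in Equation~\ref{eq:d_discounted_def} gives
\begin{align}
    J_\nu(\pi) = \sum_s d_\pi^{\nu}(s)\, r_\pi(s),
\end{align}
where $r_\pi(s) := \mathbb{E}_{a\sim\pi(\cdot\mid s)}[r(s,a)]$ and $d_\pi^{\nu}$ is the discounted visitation from start distribution $\nu$. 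Applying this both to $\nu=\mu$ and to $\nu=d_b$, the on-off gap becomes $\sum_s (d_\pi^{\mu}(s) - d_\pi^{d_b}(s)) r_\pi(s)$.

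Next I would invoke Corollary~\ref{lemma:s0_indep}: under irreducibility and aperiodicity of $P_\pi$, the limit $\lim_{\gamma\rightarrow 1} d_\pi^{\nu}$ exists and equals the unique stationary distribution $d_\pi^*$ of $P_\pi$ for \emph{any} starting distribution $\nu$. Choosing $\nu=\mu$ and $\nu=d_b$ separately, both $d_\pi^{\mu}$ and $d_\pi^{d_b}$ converge to the same $d_\pi^*$ as $\gamma\rightarrow 1$. Because $\mathcal{S}$ is finite and $r_\pi$ is bounded in $[0,1]$, we can pass the limit through the finite sum, so
\begin{align}
    \lim_{\gamma\rightarrow 1} J_\mu(\pi) = \sum_s d_\pi^*(s)\, r_\pi(s) = \lim_{\gamma\rightarrow 1} J_{d_b}(\pi),
\end{align}
which yields the claimed equality in the $\gamma\rightarrow 1$ limit.

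The only non-routine step is the Fubini-style swap of the infinite time sum with the state sum inside $V_\pi$; the rest is direct bookkeeping plus the already-proved Corollary~\ref{lemma:s0_indep}. The swap is immediate here since the state space is finite, rewards are in $[0,1]$, and $\sum_t \gamma^t$ converges absolutely, so no subtlety about conditional convergence arises. In short, the normalization $(1-\gamma)$ converts the value-based objective into a visitation-weighted average reward, and Corollary~\ref{lemma:s0_indep} then erases the dependence on the initial distribution, collapsing $J_\mu$ and $J_{d_b}$ to a common value.
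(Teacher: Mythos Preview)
Your argument is correct and follows essentially the same route as the paper: both rewrite the normalized objective as an expected one-step reward under the discounted visitation measure and then invoke Corollary~\ref{lemma:s0_indep} to show the starting distribution becomes irrelevant as $\gamma\to 1$. The only cosmetic difference is that the paper forms the difference $J_{d_b}-J_\mu$ first and applies Corollary~\ref{lemma:s0_indep} statewise (with initial point masses $\mathbf{e}_s$) before using $\sum_s(d_b(s)-\mu(s))=0$, whereas you apply the corollary once for each starting distribution and match the two limits directly; neither version adds or removes any substantive step.
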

Theorem~\ref{th:equiv_value} confirms that the two objectives are equal for $\gamma\rightarrow1$. This result draws a connection between the two objectives, which links the excursion objective with the actual deployment performance of a policy. Hence, it hints that for $\gamma$ close enough to 1, if the chain has a unique stationary distribution, it is sufficient to optimize $J_{d_b}(\pi)$ to improve the performances at deployment time. However, in general, the derivative and the limit do not commute 
\begin{align}
    \nabla_\theta \lim_{\gamma\rightarrow 1} J(\pi_\theta) \neq \lim_{\gamma\rightarrow 1} \nabla_\theta J (\pi_\theta)
\end{align}
meaning that the result in Theorem~\ref{th:equiv_value} about the value of the objectives does not directly translate to the gradients. Ultimately we aim to find conditions under which OPPG methods would retrieve the same optimal policy as their on-policy counterpart when provided with the same initialization and value function estimation. 
For this reason, we provide additional results by studying the difference between the two gradients. 
\begin{theorem}
    \label{th:grad_ub}
    For an MDP with finite states, compact action space $\mathcal{A}\subseteq [a_l, a_u]^k$, and bounded p-norm $\left\lVert\nabla_\theta\pi_\theta(a|s) \right\rVert_p\; \forall (s,a)\in\mathcal{S}\times\mathcal{A}$, we have
    \begin{align}
        \left\lVert \nabla_\theta J_{d_b} (\pi_\theta) - \nabla_\theta J_\mu (\pi_\theta)\right\rVert_p \leq 2C(a_u - a_l)^k|\mathcal{S}|^{3/2} d_{TV}(d_b||\mu) \,,
    \end{align}
    where $C = \max_{(s,a)}\left\lVert\nabla_\theta\pi_\theta(a|s) \right\rVert_p$.
\end{theorem}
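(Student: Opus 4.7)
The plan is to reduce the gradient gap to a single weighted sum over states, bound the state-weight difference by the total variation distance, and bound the per-state integrand using the compactness of the action space and the assumed norm bound on $\nabla_\theta\pi_\theta$.

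First, I would put both gradients in a common form. Using Equation~\ref{eq:imani_grad} with interest $i \equiv 1-\gamma$ for the off-policy side (which makes the emphatic weights $m = (1-\gamma)(\mathbb{I}-\gamma P_\pi)^{-1} d_b$ sum to one) and the standard policy gradient theorem for the on-policy side (with normalized state visitation $d_\pi = (1-\gamma)(\mathbb{I}-\gamma P_\pi)^{-1} \mu$), both gradients take the common form
\[
\nabla_\theta J_\nu(\pi_\theta) = \sum_{s\in\mathcal{S}} w_\nu(s)\, g(s), \qquad g(s) := \int_\mathcal{A} Q_\pi(s,a)\,\nabla_\theta\pi_\theta(a\mid s)\,da,
\]
with $w_\mu = d_\pi$ and $w_{d_b} = m$, both probability distributions over $\mathcal{S}$. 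Subtracting,
\[
\nabla_\theta J_{d_b}(\pi_\theta) - \nabla_\theta J_\mu(\pi_\theta) = \sum_{s}\bigl[A(d_b-\mu)\bigr](s)\,g(s), \qquad A := (1-\gamma)(\mathbb{I}-\gamma P_\pi)^{-1}.
\]

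Second, I would bound the two factors separately. For $g$, the hypothesis $\|\nabla_\theta \pi_\theta(a\mid s)\|_p \leq C$ combined with the bound $|Q_\pi(s,a)|\leq \tfrac{1}{1-\gamma}$ and $\mathrm{Vol}(\mathcal{A}) \leq (a_u - a_l)^k$ give, via Minkowski's integral inequality, $\|g(s)\|_p \leq \tfrac{C(a_u-a_l)^k}{1-\gamma}$. For the weights, $A$ is column-stochastic (each of its columns is itself a normalized state visitation starting from a Dirac distribution), hence a contraction in the $1$-norm, so $\|A(d_b-\mu)\|_1 \leq \|d_b - \mu\|_1 = 2\,d_{TV}(d_b\|\mu)$. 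The natural combination is then the triangle inequality $\bigl\|\sum_s \alpha_s v_s\bigr\|_p \leq \|\alpha\|_1 \max_s\|v_s\|_p$, which pieces the two halves together.

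The main obstacle is matching the exact constants in the stated bound, in particular producing the $|\mathcal{S}|^{3/2}$ factor rather than a $(1-\gamma)^{-1}$. The tightest route outlined above yields a bound proportional to $d_{TV}(d_b\|\mu)/(1-\gamma)$, so to recover the stated dependence one needs to deliberately pass through coarser norm equivalences on $\mathbb{R}^{|\mathcal{S}|}$: for instance, routing the state sum through $2$-norms via Cauchy-Schwarz, $\sum_s |\alpha_s|\,\|g(s)\|_p \leq \|\alpha\|_2\,\sqrt{|\mathcal{S}|}\,\max_s\|g(s)\|_p$, and using the operator bound $\|A\|_2 \leq \sqrt{\|A\|_1\,\|A\|_\infty} \leq \sqrt{|\mathcal{S}|}$ (which follows because $A$ has non-negative entries, column sums equal to one, and row sums at most $|\mathcal{S}|$). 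Arranging these trades so that the two powers of $\sqrt{|\mathcal{S}|}$ compound while the residual $(1-\gamma)$ factors cancel against those hidden in the normalization of $A$ is the finicky bookkeeping step of the proof.
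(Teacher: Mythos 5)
Your setup and the first half of your argument coincide exactly with the paper's proof: the same common form $\sum_s w_\nu(s)\,g(s)$ with $w_{d_b}-w_\mu = (1-\gamma)(\mathbb{I}-\gamma P_\pi)^{-1}(d_b-\mu)$, the same Minkowski steps, and the same bound $\lVert g(s)\rVert_p \le C(a_u-a_l)^k/(1-\gamma)$ from $|Q_\pi|\le (1-\gamma)^{-1}$ and the volume of $\mathcal{A}$. The divergence — and the gap — is in how the state-weight discrepancy is handled. Your ``tightest route'' via column-stochasticity of $A$ is correct mathematics, but it proves a \emph{different} inequality, $2C(a_u-a_l)^k(1-\gamma)^{-1}d_{TV}(d_b\Vert\mu)$, which does not imply the stated one: whenever $\gamma > 1-|\mathcal{S}|^{-3/2}$ (precisely the regime the paper cares about) your right-hand side exceeds the claimed $2C(a_u-a_l)^k|\mathcal{S}|^{3/2}d_{TV}(d_b\Vert\mu)$. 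Your fallback route also falls short: $\lVert\alpha\rVert_1\le\sqrt{|\mathcal{S}|}\,\lVert\alpha\rVert_2$ combined with $\lVert A\rVert_2\le\sqrt{|\mathcal{S}|}$ produces a factor $|\mathcal{S}|$, not $|\mathcal{S}|^{3/2}$, and leaves the $(1-\gamma)^{-1}$ from the $Q$-bound intact. The hope that this residual factor ``cancels against the normalization of $A$'' cannot be realized on that route, because the $(1-\gamma)$ inside $A$ is exactly what you already spent to make $A$ column-stochastic (i.e., to get $\lVert A\rVert_1=1$); it is not available a second time.

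The step you are missing is the paper's: keep the resolvent unnormalized and apply Cauchy--Schwarz state by state, $|\mathbf{e}_s^T(\mathbb{I}-\gamma P_\pi)^{-1}(d_b-\mu)|\le\lVert\mathbf{e}_s^T(\mathbb{I}-\gamma P_\pi)^{-1}\rVert_2\,\lVert d_b-\mu\rVert_2$, bound each row norm by $(1-\gamma)^{-1}\sqrt{|\mathcal{S}|}$ using the crude elementwise bound $\mathbf{e}_s^T(P_\pi)^t\le\mathbf{1}^T$ inside the geometric series, and then sum over the $|\mathcal{S}|$ states; the product $|\mathcal{S}|\cdot\sqrt{|\mathcal{S}|}$ is where $|\mathcal{S}|^{3/2}$ comes from, the $(1-\gamma)^{-1}$ is absorbed by the explicit $(1-\gamma)$ factor the paper carries in $w_{d_b}-w_\mu$, and $\lVert d_b-\mu\rVert_2\le\lVert d_b-\mu\rVert_1=2\,d_{TV}(d_b\Vert\mu)$ finishes. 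If you substitute that one block, the rest of your write-up goes through unchanged. Your $\ell_1$-contraction observation is still worth recording as a remark: it yields an incomparable bound with no $|\mathcal{S}|$ dependence that is tighter than the paper's whenever $(1-\gamma)^{-1}<|\mathcal{S}|^{3/2}$, and it correctly anticipates that the paper's constant is obtained only by deliberately coarse norm bounds.
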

Thanks to the dependency on $d_{TV}(d_b||\mu)$, this result aligns with the common interpretation of the excursion objective, where the state visitation of the behavioral policy $d_b$ is considered as the initial state distribution from which an infinite excursion with $\pi$ is performed. Interestingly, for a discount factor close enough to 1, the dependence on the initial state distribution can be dropped by assuming the irreducibility of the Markov chain. To highlight such dependence for the bound, we derive a modified version of Theorem~\ref{th:grad_ub} adding irreducibility and aperiodicity assumptions. 
\begin{theorem}
    \label{th:grad_ub_w_ergodicty}
    For an MDP with finite states, compact action space $\mathcal{A}\subseteq [a_l, a_u]^k$, and bounded p-norm $\left\lVert\nabla_\theta\pi_\theta(a|s) \right\rVert_p\; \forall (s,a)\in\mathcal{S}\times\mathcal{A}$, if the Markov chain $P_\pi\in \mathbb{R}^{|\mathcal{S}|\times|\mathcal{S}|}$ is irreducible and aperiodic then we have
    \begin{align}
        \left\lVert \nabla_\theta J_{d_b} (\pi_\theta) - \nabla_\theta J_\mu (\pi_\theta)\right\rVert_p \leq (1-\gamma)2T_mC(a_u - a_l)^k|\mathcal{S}|^{3/2} d_{TV}(d_b||\mu) \,,
    \end{align}
    where $C = \max_{(s,a)}\left\lVert\nabla_\theta\pi_\theta(a|s) \right\rVert_p$.
\end{theorem}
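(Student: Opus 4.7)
The plan is to mirror the derivation of Theorem~\ref{th:grad_ub} while replacing only the step that bounds the discrepancy between the two state-visitation distributions by a sharper, ergodicity-aware estimate that introduces the factor $(1-\gamma)T_m$.

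First, I would put both gradients on a common footing. The on-policy gradient is
\begin{align*}
\nabla_\theta J_\mu(\pi_\theta) = \sum_s d_\pi(s)\int_{\mathcal{A}} Q_\pi(s,a)\,\nabla_\theta \pi_\theta(a|s)\,da,
\end{align*}
with $d_\pi(s) = (1-\gamma)\sum_{t=0}^{\infty}\gamma^t ((P_\pi)^t \mu)(s)$. For the off-policy gradient, Equation~\ref{eq:imani_grad} with the normalized interest function $i(s) = 1-\gamma$ gives $m = (1-\gamma)(\mathbb{I}-\gamma P_\pi)^{-1} d_b = (1-\gamma)\sum_{t=0}^{\infty}\gamma^t (P_\pi)^t d_b$, which is exactly the normalized state-visitation distribution one would obtain by replacing the initial distribution $\mu$ with $d_b$; call it $\tilde d_\pi$. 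Subtracting,
\begin{align*}
\nabla_\theta J_{d_b}(\pi_\theta) - \nabla_\theta J_\mu(\pi_\theta) = \sum_s \bigl(\tilde d_\pi(s) - d_\pi(s)\bigr)\int_{\mathcal{A}} Q_\pi(s,a)\,\nabla_\theta \pi_\theta(a|s)\,da.
\end{align*}

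From here I would invoke exactly the same sequence of bounds used in the proof of Theorem~\ref{th:grad_ub}: take $p$-norms, apply the triangle inequality, use the uniform bound $\lVert\nabla_\theta \pi_\theta(a|s)\rVert_p \leq C$, the normalized value bound $Q_\pi \leq 1$, the action-volume estimate $|\mathcal{A}| \leq (a_u - a_l)^k$, and the $\mathcal{S}$-norm conversions responsible for the $|\mathcal{S}|^{3/2}$ factor. The only remaining quantity to control is $\lVert \tilde d_\pi - d_\pi\rVert_1$, and this is where the ergodicity assumption enters.

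The new ingredient is
\begin{align*}
\tilde d_\pi - d_\pi = (1-\gamma)\sum_{t=0}^{\infty}\gamma^t (P_\pi)^t (d_b - \mu).
\end{align*}
Under irreducibility and aperiodicity the chain has a unique stationary distribution, so, as in the proof of Lemma~\ref{th:disc_limit_distr}, there exists a finite strong stationary time $T_m$ (taken as the maximum of the strong stationary times of $\mu$ and $d_b$) after which $(P_\pi)^t d_b = (P_\pi)^t \mu$ and so their difference is identically zero. The tail of the series therefore vanishes and only a truncated sum remains. Since $P_\pi$ is column-stochastic, each $(P_\pi)^t$ is non-expansive in $\lVert\cdot\rVert_1$, and bounding $\gamma^t \leq 1$ yields $\lVert \tilde d_\pi - d_\pi\rVert_1 \leq (1-\gamma)T_m \lVert d_b - \mu\rVert_1 = 2(1-\gamma)T_m\, d_{TV}(d_b||\mu)$. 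Substituting this into the $p$-norm chain above produces the claimed bound.

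The main obstacle is a small bookkeeping subtlety around $T_m$: the definition attaches it to a single starting distribution, whereas the argument compares two. The clean resolution is to let $T_m$ denote the maximum of the two strong stationary times, which is guaranteed to be finite by ergodicity; past that horizon both forward iterates coincide with the unique stationary distribution, so their difference is identically zero and the series truncates as required. Aside from this, the proof is a faithful re-run of the proof of Theorem~\ref{th:grad_ub} with the sharpened estimate for $\lVert \tilde d_\pi - d_\pi\rVert_1$ substituted in place of the naive geometric-series bound.
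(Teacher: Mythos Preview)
Your core idea coincides with the paper's: both proofs reduce, via the same Minkowski/$Q_\pi$/$C$/action-volume steps of Theorem~\ref{th:grad_ub}, to the bound
\[
\left\lVert \nabla_\theta J_{d_b} - \nabla_\theta J_\mu \right\rVert_p \leq C(a_u-a_l)^k\,\lVert m-d_\pi\rVert_1,
\]
write $m-d_\pi=(1-\gamma)\sum_{t\geq 0}\gamma^t(P_\pi)^t(d_b-\mu)$, and then use irreducibility and aperiodicity to kill every term with $t\geq T_m$, leaving a finite sum of length $T_m$. Where you diverge is in bounding that truncated sum: the paper applies Cauchy--Schwarz rowwise, bounding $\sum_s\lVert \mathbf{e}_s^T\sum_{t<T_m}(P_\pi)^t\rVert_2\leq T_m|\mathcal{S}|^{3/2}$ and then $\lVert d_b-\mu\rVert_2\leq\lVert d_b-\mu\rVert_1$, whereas you use the $\ell_1$-non-expansiveness of column-stochastic matrices to get directly $\lVert m-d_\pi\rVert_1\leq (1-\gamma)T_m\lVert d_b-\mu\rVert_1$. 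Your route is more elementary and actually yields the \emph{tighter} inequality $\left\lVert \nabla_\theta J_{d_b} - \nabla_\theta J_\mu \right\rVert_p \leq 2(1-\gamma)T_m C(a_u-a_l)^k\,d_{TV}(d_b\Vert\mu)$, from which the stated bound follows since $|\mathcal{S}|^{3/2}\geq 1$. Your careful treatment of $T_m$ (taking the maximum over the two starting distributions) is also cleaner than the paper's, which silently assumes $(P_\pi)^{T_m}$ has identical columns.

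One point of confusion to clean up: the $|\mathcal{S}|^{3/2}$ factor does \emph{not} come from any ``$\mathcal{S}$-norm conversion'' in the initial reduction to $\lVert m-d_\pi\rVert_1$; it arises solely from the paper's Cauchy--Schwarz step when bounding that $\ell_1$ norm, and your contractivity argument bypasses it entirely. So the list of ``same steps'' you invoke should stop at the action-volume estimate; after that you are doing something genuinely different (and sharper). A minor slip in the same list: the relevant bound is $|Q_\pi|\leq (1-\gamma)^{-1}$, which cancels the $(1-\gamma)$ prefactor coming from the normalized objectives, not $Q_\pi\leq 1$.
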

Irreducibility together with aperiodicity allows us to obtain a bound that is tighter for $(1-\gamma) T_m < 1$, which can be rewritten as a condition on the discount factor by solving for $\gamma$. Hence, Theorem~\ref{th:grad_ub_w_ergodicty} is tighter than Theorem~\ref{th:grad_ub} for $\gamma>(T_m -1)/T_m$, while having the desired dependence on the discount factor explicitly stated. Interestingly, also the significance of the strong stationary time $T_m$ appears, confirming the intuition that it is desirable to have a small $T_m$ and a discount factor close to 1. For $\gamma\rightarrow1$ we show that, similarly to the values, also the gradients coincide.
\begin{corollary}
    \label{cor:grad_equiv}
    For an MDP with finite states,  compact action space $\mathcal{A}\subseteq [a_l, a_u]^k$, and bounded p-norm $\left\lVert\nabla_\theta\pi_\theta(a|s) \right\rVert_p\; \forall (s,a)\in\mathcal{S}\times\mathcal{A}$, if the Markov chain $P_\pi\in \mathbb{R}^{|\mathcal{S}|\times|\mathcal{S}|}$ is irreducible and aperiodic, then for $\gamma\rightarrow1$ it holds that
    \begin{align}
        \nabla_\theta J_{d_b} (\pi_\theta) = \nabla_\theta J_\mu(\pi_\theta)
    \end{align}
\end{corollary}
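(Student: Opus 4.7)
The plan is to obtain the corollary as an immediate consequence of Theorem~\ref{th:grad_ub_w_ergodicty} by taking the limit $\gamma \to 1$ on both sides of the gradient norm bound. Under the hypotheses of the corollary (finite state space, compact action space, bounded policy-gradient norm, and an irreducible aperiodic $P_\pi$) all the assumptions of Theorem~\ref{th:grad_ub_w_ergodicty} are satisfied, so I may invoke it directly to write
\begin{equation*}
\left\lVert \nabla_\theta J_{d_b}(\pi_\theta) - \nabla_\theta J_\mu(\pi_\theta)\right\rVert_p \;\leq\; (1-\gamma)\,2 T_m C (a_u - a_l)^k |\mathcal{S}|^{3/2}\, d_{TV}(d_b \,\|\, \mu).
\end{equation*}

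Next I would verify, factor by factor, that the right-hand side is dominated by $(1-\gamma)$ times a constant that does not grow as $\gamma \to 1$. The quantities $C$, $(a_u - a_l)^k$, and $|\mathcal{S}|^{3/2}$ are defined in terms of the policy class and the MDP and do not involve the discount factor. The strong stationary time $T_m$ depends only on the Markov chain $P_\pi$ and on $\mu$, neither of which involves $\gamma$, and is finite thanks to the Diaconis--Fill result (Proposition 1.10(b) in \citet{diaconis1990}) cited earlier, since $P_\pi$ is assumed to be finite-state, irreducible and aperiodic. Finally, even though $d_b$ itself depends on $\gamma$ through the definition of the discounted state visitation, both $d_b$ and $\mu$ remain probability distributions for every $\gamma \in [0,1)$, so $d_{TV}(d_b \,\|\, \mu) \leq 1$ uniformly in $\gamma$.

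Combining these observations, the right-hand side is upper bounded by $(1-\gamma)\,K$ for a constant $K$ independent of $\gamma$. Letting $\gamma \to 1$ sends this upper bound to $0$; since the norm on the left-hand side is non-negative, the squeeze theorem forces $\left\lVert \nabla_\theta J_{d_b}(\pi_\theta) - \nabla_\theta J_\mu(\pi_\theta)\right\rVert_p \to 0$, which is the stated equality in the limit.

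The only genuinely subtle step is justifying that $T_m$ is indeed a $\gamma$-independent constant: one must observe that although $d_b$ is defined through the $\gamma$-discounted visitation of the behavioural policy $b$, the strong stationary time $T_m$ that appears in Theorem~\ref{th:grad_ub_w_ergodicty} is the one associated to the Markov chain $P_\pi$ induced by the target policy $\pi$ and the transition kernel $T$, so no $\gamma$-dependence leaks into it. Everything else is a routine limit passage inside a constant upper bound, and no calculation beyond invoking the already-proved theorem is required.
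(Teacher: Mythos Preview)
Your proposal is correct and follows essentially the same approach as the paper: both invoke Theorem~\ref{th:grad_ub_w_ergodicty} and let $\gamma\to 1$ in the upper bound, then use non-negativity of the norm to conclude. Your version is actually more careful than the paper's, as you explicitly check that each factor on the right-hand side (in particular $T_m$ and $d_{TV}(d_b\|\mu)$) is bounded uniformly in $\gamma$, a point the paper leaves implicit.
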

The proof revolves around taking the limit for $\gamma\rightarrow1$ in the upper bound provided in Theorem~\ref{th:grad_ub_w_ergodicty}.

Corollary~\ref{cor:grad_equiv} suggests that by choosing the appropriate $\gamma$ of the Markov chain, we can make gradients arbitrarily close to each other. The equality holds only in the limit but, similarly to Theorem~\ref{th:equiv_value}, it also provides a receipt to mitigate the gap between the two objectives. In particular, given that for $\gamma\rightarrow1$ then $\nabla_\theta J_{d_b} (\pi_\theta)= \nabla_\theta J_\mu(\pi_\theta)$ for the $(\epsilon,\delta)$-definition of limit we can say that given an $\epsilon>0$ we can always find a $\delta>0$ such that $0<|\gamma-1|<\delta$ implies $|\nabla_\theta J_{d_b} (\pi_\theta)- \nabla_\theta J_\mu(\pi_\theta)|< \epsilon$. By constraining the proximity between the two gradients with a user-defined parameter $\epsilon$, we can establish the existence of a $\gamma$ sufficiently close to 1 that guarantees the desired $\epsilon$-closeness between the gradients. Importantly, the tightness of the bounds does not affect our conclusions since when a loose upper bound of a positive quantity converges to zero then also tighter bounds must converge at least at the same rate. 
\paragraph{Summary} Our first result connects the discounted state visitation to the limiting distribution of the Markov chain for $\gamma$ approaching 1. This instrumental result, together with irreducibility and aperiodicity, ends up unifying the on- and off-policy objectives (and their gradient) when $\gamma$ tends to 1. The importance of these results lies in identifying assumptions under which the two objectives coincide.

\section{Experiments}
In Section~\ref{sec:obj_coincide}, we established a theoretical connection between excursion and on-policy objective. In the following, we first do an empirical validation for our theoretical result in an MDP matching all our assumptions. We then show how our results have a significant impact on realistic environments. We refer to Appendix~\ref{sec:app_exp_setup} for details on the experimental setup.
\subsection{Empirical Validation}
\begin{figure}[t]
\centering
\begin{subfigure}{0.35\textwidth}
    \includegraphics[width=\textwidth]{./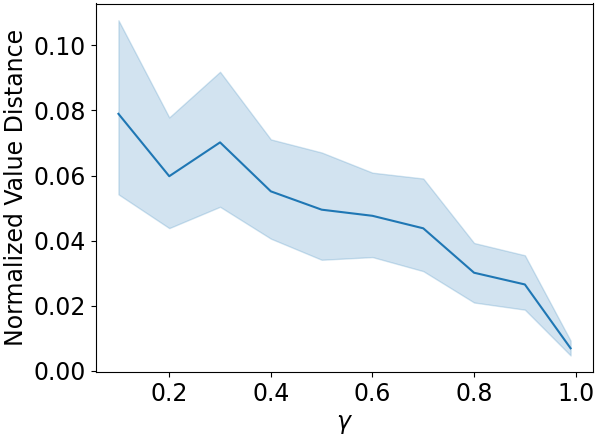}
    \caption{}
    \label{fig:mad}
\end{subfigure}
\begin{subfigure}{0.35\textwidth}
    \includegraphics[width=\textwidth]{./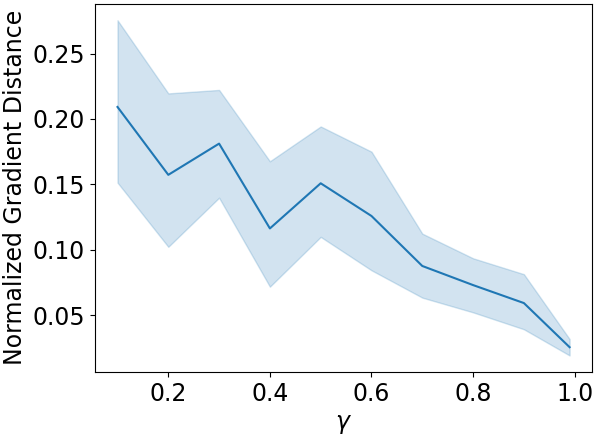}
    \caption{}
    \label{fig:grad_error}
\end{subfigure}
\caption{On-Off gap as mean and 95\% confidence interval: (a) Dependence of value error, computed as $(1-\gamma)|J_\mu(\pi)-J_{d_b}(\pi)|$, on $\gamma$. (b) Dependence of gradient error, computed as $(1-\gamma)\lVert \nabla J_\mu(\pi)-\nabla J_{d_b}(\pi)\lVert $, on $\gamma$.}
\label{fig:empirical_results}
\end{figure}

We start by looking into how the on-off gap behaves as a function of $\gamma$ for a simple 2-state MDP that matches all our assumptions. We select a behavioral policy $b$ and, for each value of $\gamma$, we compute the mismatch between the two objectives and their gradients over 25 randomly sampled policies $\pi$. Figure~\ref{fig:mad} empirically shows that the gap approaches 0 for $\gamma$ close to 1, which reflects what Theorem~\ref{th:equiv_value} theoretically establishes. Figure~\ref{fig:mad} also provides empirical evidence for the trend across the whole range of $\gamma$ going beyond the result in the limit described in Theorem~\ref{th:equiv_value}. Similarly, we look at how the gradient of the two objectives differs across a range of policies for different values of $\gamma$. Figure~\ref{fig:grad_error} highlights a similar trend as for the value of the objectives, as the distance between the gradients decreases to zero as the $\gamma$ approaches 1, which again matches our previous results. We recall that we are dealing with normalized objectives ranging between 0 and 1 to have a meaningful comparison of errors across different values of $\gamma$.

\subsection{Practical Implications}
\label{subsection:practical}

So far, we have theoretically and empirically verified that there exists a $\gamma$-dependent gap between the two objectives. However, such a gap might not be significant for practical applications. For this reason, in the following, we assess the practical implications of our results by studying whether neglecting this gap can be harmful in practical applications.

The off-policy gradient \citep{imani2018} is not well-suited to be directly computed from samples as it requires estimating the transition probability matrix $P_\pi$. Therefore, we select a different use case of the excursion objective beyond PG methods to show the practical implications of our results.  We look into the problem of \textit{offline policy selection} \citep{yang22a}, which refers to the task of selecting the best-performing policy from a finite set of known policies when only data from a behavioral policy is available. In particular, we show that, even for continuous state spaces, only when selecting high enough values of $\gamma$ the off-policy objective allows us to retrieve the same ranking of policies as its on-policy counterpart. To prove this point, we look into the impact of the discount factor on Kendall's $\tau$ correlation coefficient \citep{kendall38} computed between the true ranking obtained using the on-policy objective and the ranking predicted by the excursion objective. Kendall's $\tau$ is high when the generated rankings are similar and 1 when the same. Once an estimate of the value function is obtained through off-policy evaluation methods, the value of the excursion objective can be directly estimated from off-policy data without relying on importance sampling. Thus, it is of high practical interest to understand to what extent the excursion objective can be used for offline policy selection.
\begin{figure*}[t]
\centering
\includegraphics[width=0.96\textwidth]{./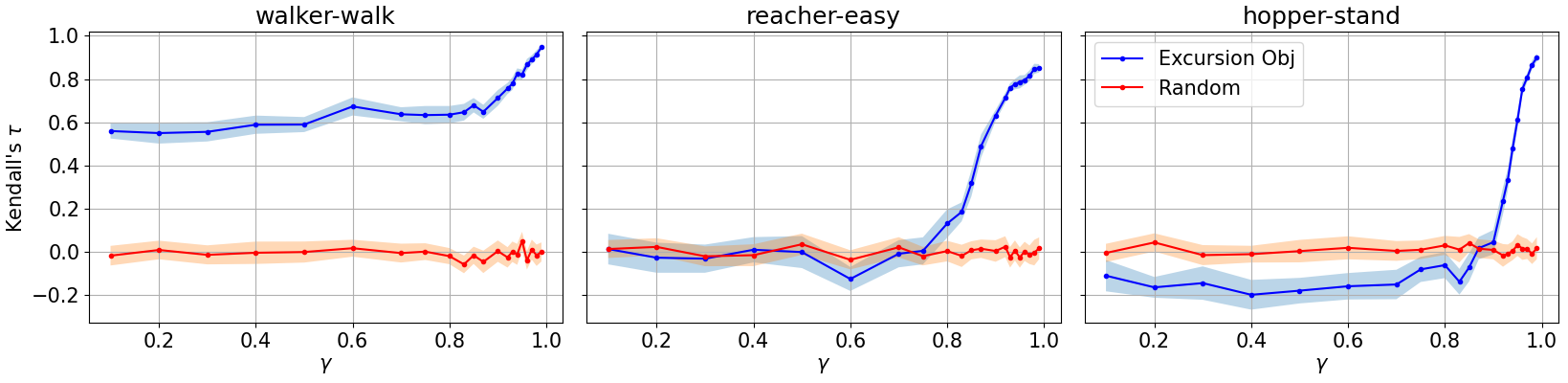}
\caption{Performances in offline policy selection (and 95\% confidence intervals) across 3 Mujoco environments as a function of $\gamma$ when ranking policies with the excursion objective or randomly, with 1 being the same ranking as ground truth.}
\label{fig:mujoco_exp}
\end{figure*}
In our experiments, we use environments from DeepMind Control Suite \citep{tunyasuvunakool2020}. These environments are irreducible and aperiodic when using stochastic policies with support over the whole action space. Irreducibility is given by the possibility of reaching any state from any other state, while aperiodicity is ensured by the non-zero probability of transitioning from one state to itself. Importantly, these environments are characterized by continuous state spaces, which violates one of our main assumptions. However, in Figure~\ref{fig:mujoco_exp} it is clear that $\gamma$ still impacts the ranking performances obtained by the excursion objective as our theoretical results predict. In particular, using a higher discount factor still enables the excursion objective to accurately describe deployment performances by consistently selecting the best-performing policy. For this reason, we conjecture our results hold in the more general setting of continuous state spaces, but we leave a theoretical analysis of this setting to future works. 

In Figure~\ref{fig:mujoco_exp}, we can notice how in all three environments the excursion objective provides a more accurate ranking as $\gamma$ increases. While in Walker the excursion objective allows to obtain good ranking performances throughout the whole range of $\gamma$, both in Reacher and Hopper the two objectives quickly reach a score around zero which is as good as choosing a random ranking. For Hopper, for $\gamma<0.87$ we observe even a negative score, which means that it would be preferable to rank the policies according to the opposite of the excursion objective. Figure~\ref{fig:mujoco_exp} shows how misleading it can be to optimize the excursion objective even at high $\gamma$ regimes, as it can lead to ranking policies worse than random instead of aligning with the on-policy objective. This can lead to selecting highly sub-optimal or even choosing randomly, policies in practice. In Hopper, to reach a ranking score of 0.6, which is equivalent to ranking correctly 80 pairs of policies out of 100, it is necessary to have $\gamma\geq0.95$. For a discussion about the link between the ranking metrics analyzed here and the statistical correlation between the two objectives, we point the reader to Appendix~\ref{subsec:Mujoco} (see discussion Figure~\ref{fig:mujoco_pvalue})

\textbf{Summary:} Even for continuous action spaces, our experiments show that there are environments where selecting $\gamma$ as high as 0.95 can still lead to a significant difference between the two objectives. This highlights the danger of using the excursion objective for $\gamma$ not close enough to 1. 

\section{Discussion}

Our results concern the difference between the two gradients at each policy update, for them to also apply to a policy gradient algorithm it is necessary to ensure that each policy visited throughout the policy optimization path satisfies our assumptions. Importantly, this can be ensured, as mentioned in Subsec.~\ref{subsection:practical}, in many practical settings with simple conditions on the policy class. For this reason, our results apply beyond ergodic MDPs which restrict the chain structure for all policies \citep[section 8.3.1]{putterman1994}. Instead, we require only the existence of some parametrizable policy for each $s_1,s_2\in\mathcal{S}$ that takes from $s_1$ to $s_2$, which represents a subset of the more general class of communicating MDPs.

It is noteworthy that choosing high discount factors is known to lead to slower convergence due to the increase in the effective horizon, see \citet[Section 4.1]{Dewanto2022} for a detailed discussion. From this, we conclude that in OPPG methods $\gamma$ can be used to trade off the alignment between the two objectives and the convergence speed, which is a new important insight following our results.

Policy improvement and value estimation are usually combined in the actor-critic framework. An important remark is that in our analysis we focused on conditions on the policy improvement step but we neglected the off-policy evaluation sub-problem and the additional complexity introduced by estimating the value off-policy \citep{hallak2017}. 

Our results suggest that the discount factor has a significant impact on OPPG methods relying on the excursion objective. The equivalence in the limit $\gamma\rightarrow 1$ shows that the gap between on- and off-policy methods naturally vanishes when optimizing for stationary behaviors as in the average reward formulation. Hence, a direct consequence of our analysis is that the excursion objective is particularly well suited for problems where it is possible to use the average reward formulation \citep{saxena2023}. Similar conclusions about the benefits of using the average reward formulation have been raised also for methods based on approximate Dynamic Programming methods \citet{Dewanto2022}. 

\citet{thomas2014} shows that several natural actor-critic methods, relying on approximate gradients of the on-policy objective, are biased not optimizing the desired objective. Instead, they optimize for the average reward objective. In our work, instead, we study the excursion objective and provide conditions for it to align with the on-policy objective, concluding that this is the case under the uniqueness of the stationary distribution and $\gamma\rightarrow 1$. This coincides with optimizing for stationary behaviors. For these reasons, both works provide pieces of evidence of the practical advantages of optimizing for the average reward objective. 

\section{Conclusions}
This work bridges the gap between two objectives used in policy gradient methods for the on- and off-policy learning contexts. In particular, we identified that the discount factor, strong stationary time, irreducibility, and aperiodicity of the Markov chain play a crucial role when dealing with the excursion objective. We establish that under assumptions on these elements, the two objectives can be made arbitrarily close and in the limit coincide. Therefore, we provide a theoretical justification for the off-policy objective, which so far has been established and used for its empirical advantages \citep{degris2012}. Our main theoretical result highlights that the excursion objective and its gradient can be made arbitrarily close to their on-policy counterparts by selecting a discount factor close to 1. This work importantly connects the objective used to evaluate policies in off-policy reinforcement learning and the actual deployment performance which is typically the main optimization goal of the reinforcement learning paradigm. 

Our experimental results demonstrate that the gap between these objectives has implications in practical settings to the point that choosing policies at random can be preferable over the excursion objective. Our analysis, to the best of our knowledge, is the first work identifying and characterizing this gap, showing its impact in realistic settings and providing conditions for the alignment between these objectives.

\section{Acknowledgements}
This work is supported by Booking.com. We thank Mustafa Mert Çelikok, Jinke He, Elena Congeduti, Zuzanna Osika, Oussama Azizi, Pascal van der Vaart, Bram van den Akker, Audrey Poinsot, Ariyan Bighashdel, and anonymous reviewers for their feedback on the manuscript.

\newpage
\bibliographystyle{apalike}
\bibliography{arxiv}

\onecolumn

\section{Appendix A - Proofs}
We recall a definition from the main paper that is essential for all our results.
\begin{definition*} (Strong stationary time)
    In a finite-state Markov chain with transition probabilities described by $P_\pi\in\mathbb{R}^{|\mathcal{S}|\times|\mathcal{S}|}$ and starting distribution $\mu\in\Delta(\mathcal{S})$, we call $T_m$ the strong stationary time if it is the smallest $t$ such that $(P_\pi)^t\mu = (P_\pi)^{t+1}\mu$. For strong stationary time $T_m$ we define $P_\pi^* := (P_\pi)^{T_m}$.
\end{definition*}
\subsection{Proof Lemma~\ref{th:disc_limit_distr}}
\begin{theorem*}
    (Discounted state visitation in the $\gamma$-limit)\\
    For an MDP with finite states and starting distribution $\mu\in\Delta(\mathcal{S})$, it holds that 
    \begin{align}
        \lim_{\gamma\rightarrow1}(1-\gamma)\sum_{t=0}^{\infty} \gamma^t (P_\pi)^t \mu = \lim_{t\rightarrow\infty} (P_\pi)^t\mu
    \end{align}
\end{theorem*}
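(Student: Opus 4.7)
The plan is to follow the intuition the paper already hints at: use the strong stationary time $T_m$ as a cutoff point to split the geometric series into a transient prefix and a stationary tail, and then show that each piece behaves as claimed when $\gamma \to 1$. Since $|\mathcal{S}|$ is finite, a finite $T_m$ exists by the Diaconis--Fill result cited earlier, and by its very definition we have $(P_\pi)^t \mu = P_\pi^* \mu$ for all $t \geq T_m$, which also gives immediate existence of $\lim_{t\to\infty} (P_\pi)^t \mu = P_\pi^* \mu$. So the RHS of the claim is simply $P_\pi^*\mu$, and the task reduces to showing that the discounted sum on the LHS tends to the same vector.

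First, I would write
\begin{align*}
(1-\gamma)\sum_{t=0}^{\infty}\gamma^t (P_\pi)^t \mu
= (1-\gamma)\sum_{t=0}^{T_m - 1}\gamma^t (P_\pi)^t\mu + (1-\gamma)\sum_{t=T_m}^{\infty}\gamma^t P_\pi^*\mu.
\end{align*}
The first sum is a fixed finite sum of vectors in the probability simplex, so its norm is bounded uniformly in $\gamma \in [0,1)$; multiplying by $(1-\gamma)$ makes it vanish in the limit. For the second sum the summand is a constant vector, so we can factor out $P_\pi^*\mu$ and evaluate the scalar geometric tail: $(1-\gamma)\sum_{t=T_m}^{\infty}\gamma^t = \gamma^{T_m}$, which tends to $1$ as $\gamma \to 1$. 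Combining the two terms gives exactly $P_\pi^*\mu$, matching the RHS.

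The main subtle point, and the step I would be most careful about, is justifying that $(P_\pi)^t \mu$ really is constant for all $t \geq T_m$. The definition in the excerpt only imposes the one-step fixed-point property $(P_\pi)^{T_m}\mu = (P_\pi)^{T_m+1}\mu$; to extend this to every later timestep one applies $P_\pi$ repeatedly to both sides, using $(P_\pi)^{T_m+1}\mu = P_\pi\big((P_\pi)^{T_m}\mu\big) = P_\pi\big((P_\pi)^{T_m+1}\mu\big) = (P_\pi)^{T_m+2}\mu$, and induction. Once this is in place, the rest is a clean two-line geometric series argument, and the proof concludes by noting that the limit of the LHS is $P_\pi^*\mu = \lim_{t\to\infty}(P_\pi)^t\mu$. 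No irreducibility or aperiodicity is required at this stage; the argument only uses finiteness of $\mathcal{S}$ through the existence of $T_m$, which matches the statement of the lemma.
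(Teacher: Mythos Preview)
Your proposal is correct and follows essentially the same route as the paper's proof: split the discounted series at the strong stationary time $T_m$, observe that the finite prefix multiplied by $(1-\gamma)$ vanishes, and evaluate the geometric tail to get $\gamma^{T_m} P_\pi^*\mu \to P_\pi^*\mu$. If anything, your version is slightly more careful, since you explicitly justify by induction that $(P_\pi)^t\mu = P_\pi^*\mu$ for all $t\geq T_m$, whereas the paper simply asserts it.
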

\begin{proof}
    We start by dividing the infinite sum at the strong stationary time to isolate the transient part of the Markov chain.
    \begin{align}
        (1-\gamma)\sum_{t=0}^{\infty} \gamma^t (P_\pi)^t \mu &= (1-\gamma)\left[\sum_{t=0}^{T_m-1} \gamma^t (P_\pi)^t \mu + \sum_{t=T_m}^{\infty} \gamma^t (P_\pi)^t \mu \right]  \\
        ((P_\pi)^t\mu =P_\pi^*\mu \;\; \forall t>T_m) \;\;\; &= (1-\gamma)\left[\sum_{t=0}^{T_m-1} \gamma^t (P_\pi)^t \mu + \sum_{t=T_m}^{\infty} \gamma^t P_\pi^* \mu  \right]
    \end{align}
    We can rewrite the second term inside the parentheses as a subtraction of two sums.
    \begin{align}
        (1-\gamma)\sum_{t=0}^{\infty} \gamma^t (P_\pi)^t \mu&= (1-\gamma)\left[\sum_{t=0}^{T_m-1} \gamma^t (P_\pi)^t \mu + 
        \left[\sum_{t=0}^{\infty} \gamma^t - \sum_{t=0}^{T_m-1} \gamma^t\right]P_\pi^* \mu\right]\\
        &= (1-\gamma)\left[\sum_{t=0}^{T_m-1} \gamma^t (P_\pi)^t \mu + 
        \left[\frac{1}{1-\gamma} - \frac{1-\gamma^{T_m}}{1-\gamma}\right]P_\pi^* \mu\right]\\
        &= (1-\gamma)\left[\sum_{t=0}^{T_m-1} \gamma^t (P_\pi)^t \mu + 
        \frac{\gamma^{T_m}}{1-\gamma}P_\pi^* \mu\right]\\
        &=c(\gamma) + \gamma^{T_m} P_\pi^* \mu
    \end{align}
    where we defined $c(\gamma)=(1-\gamma)\sum_{t=0}^{T_m-1} \gamma^t (P_\pi)^t \mu$. Observe that $\lim_{\gamma\to 1} c(\gamma) = 0$ and hence:
    \begin{align}
    \lim_{\gamma\rightarrow1}(1-\gamma)\sum_{t=0}^{\infty} [ \gamma^t (P_\pi)^t \mu ]&=\lim_{\gamma\to 1} \left[c(\gamma) + \gamma^{T_m} P_\pi^* \mu \right] \\
    &= P_\pi^*\mu \\
    &= \lim_{t\rightarrow\infty} (P_\pi)^t\mu \,.
    \end{align}
\end{proof}
\subsection{Proof Corollary~\ref{lemma:s0_indep}}
\begin{corollary*}
    For an MDP with finite states, if the Markov chain $P_\pi\in \mathbb{R}^{|\mathcal{S}|\times|\mathcal{S}|}$ with entries $P_{\pi}(s',s)=\int_{\mathcal{A}} \pi_\theta(a|s)T(s'|s,a) da$ is irreducible and aperiodic, given any state distributions $\mu\in\Delta(\mathcal{S})$ and $\nu\in\Delta(\mathcal{S})$, it holds that
    \begin{align}
        \lim_{\gamma\rightarrow1}(1-\gamma)\sum_{t=0}^{\infty} \gamma^t (P_\pi)^t \mu = \lim_{t\rightarrow\infty} (P_\pi)^t \nu \,.
    \end{align}
\end{corollary*}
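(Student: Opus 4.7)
The plan is a short two-step reduction that combines Lemma~\ref{th:disc_limit_distr} with the uniqueness of the stationary distribution under the stated assumptions. The first step converts the left-hand side into a limiting distribution starting from $\mu$; the second step replaces $\mu$ with $\nu$ by arguing that both $(P_\pi)^t \mu$ and $(P_\pi)^t \nu$ converge to the same distribution.

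Concretely, I would first apply Lemma~\ref{th:disc_limit_distr} verbatim with starting distribution $\mu$ to get
\[
\lim_{\gamma\to 1}(1-\gamma)\sum_{t=0}^{\infty}\gamma^t (P_\pi)^t \mu \;=\; \lim_{t\to\infty}(P_\pi)^t \mu .
\]
Then I would invoke the Remark stated earlier in the paper: since $\mathcal{S}$ is finite and $P_\pi$ is irreducible and aperiodic, the Markov chain admits a unique stationary distribution $d^\star$. Because $P_\pi$ is aperiodic and irreducible on a finite state space, the matrix $(P_\pi)^t$ converges entrywise to a rank-one matrix whose every column equals $d^\star$; applying this to any probability vector $\nu \in \Delta(\mathcal{S})$ (which sums to one) yields $\lim_{t\to\infty}(P_\pi)^t \nu = d^\star$, and identically $\lim_{t\to\infty}(P_\pi)^t \mu = d^\star$. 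Chaining these two equalities through $d^\star$ gives the claim.

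The main obstacle, such as it is, is conceptual rather than computational: one must be careful that the limit $\lim_{t\to\infty}(P_\pi)^t \nu$ exists for every initial distribution $\nu$ and equals the unique stationary distribution. This is a standard consequence of the finite, irreducible, aperiodic setting already cited in the paper's Remark, so no new machinery is required; the proof is essentially a one-line composition of Lemma~\ref{th:disc_limit_distr} with the uniqueness of the stationary (hence limiting) distribution.
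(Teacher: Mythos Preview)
Your proposal is correct and mirrors the paper's own proof: both first apply Lemma~\ref{th:disc_limit_distr} to obtain $\lim_{t\to\infty}(P_\pi)^t\mu$, and then use irreducibility and aperiodicity (phrased by the paper as ``$P_\pi^*$ has identical columns'' and by you as convergence to a rank-one matrix with columns equal to the unique stationary distribution) to replace $\mu$ by $\nu$.
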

\begin{proof}
From Lemma~\ref{th:disc_limit_distr} we have:
    \begin{align}
        \lim_{\gamma\rightarrow1}(1-\gamma)\sum_{t=0}^{\infty} \gamma^t (P_\pi)^t \mu &= \lim_{t\rightarrow\infty} (P_\pi)^t\mu
    \end{align}
    By definition of strong stationary time we have that for any $t>T_m$, then $(P_\pi)^t=P_\pi^*$.
    \begin{align}
        \lim_{\gamma\rightarrow1}(1-\gamma)\sum_{t=0}^{\infty} \gamma^t (P_\pi)^t \mu &= P_\pi^*\mu\\
        (\text{Irreducibility + Aperiodicity})\;\;\;&= P_\pi^*\nu
    \end{align}
    where in the last equality we used the fact that $P_\pi^*$ has identical columns\footnote{Note that here we are using the transpose of the notation used in Appendix A \cite{putterman1994}.} when both irreducibility and aperiodicity assumptions are met.\\
\end{proof}
\subsection{Proof Theorem~\ref{th:equiv_value}}
\begin{theorem*}
    (Objectives equivalence in the $\gamma$-limit)\\
    For an MDP with finite states, if the Markov chain $P_\pi\in \mathbb{R}^{|\mathcal{S}|\times|\mathcal{S}|}$ with entries $P_{\pi}(s',s)=\int_{\mathcal{A}} \pi_\theta(a|s)T(s'|s,a) da$ is irreducible and aperiodic then for $\gamma\rightarrow1$ it holds that
    \begin{align}
        J_{d_b}(\pi) = J_\mu(\pi) \,,
    \end{align}
    for $J_{d_b}$ and $J_\mu$ the normalized objectives.
\end{theorem*}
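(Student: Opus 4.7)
The strategy is to rewrite both normalized objectives $J_{d_b}(\pi)$ and $J_\mu(\pi)$ as inner products between a discounted state-visitation distribution (with the corresponding initial distribution) and the expected-reward vector $r_\pi(s) := \sum_a \pi(a|s)\,r(s,a)$, and then pass to the $\gamma \to 1$ limit using the results of Section~4.

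First, for any initial distribution $\nu \in \Delta(\mathcal{S})$, unrolling the value function and swapping the (absolutely convergent, since $|\mathcal{S}|<\infty$ and $r\in[0,1]$) sums over $t$ and $s'$ gives the identity
\begin{align}
J_\nu(\pi) \;=\; (1-\gamma)\,\mathbb{E}_{s\sim\nu}\!\left[V_\pi(s)\right] \;=\; \sum_{s'\in\mathcal{S}} r_\pi(s')\, d_\pi^{\nu}(s'),
\end{align}
where $d_\pi^{\nu}(s') := (1-\gamma)\sum_{t=0}^\infty \gamma^t [(P_\pi)^t \nu](s')$ is the discounted state visitation under $\pi$ when $s_0\sim\nu$. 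Applying Lemma~\ref{th:disc_limit_distr} with starting distribution $\nu$ yields $\lim_{\gamma\to 1} d_\pi^{\nu} = P_\pi^{*}\nu$, and because $\mathcal{S}$ is finite I can pass the $\gamma$-limit inside the sum over $s'$ to obtain
\begin{align}
\lim_{\gamma\to 1} J_\nu(\pi) \;=\; \sum_{s'\in\mathcal{S}} r_\pi(s')\,[P_\pi^{*}\nu](s').
\end{align}

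Next I invoke Corollary~\ref{lemma:s0_indep}, whose proof uses exactly the irreducibility and aperiodicity of $P_\pi$ to establish that the columns of $P_\pi^{*}$ are identical; hence $P_\pi^{*}\nu$ does not depend on the starting distribution $\nu$ and coincides with the unique stationary distribution of the chain. Specializing to $\nu = d_b$ and $\nu = \mu$ therefore yields
\begin{align}
\lim_{\gamma\to 1} J_{d_b}(\pi) \;=\; \sum_{s'} r_\pi(s')\,[P_\pi^{*}\mu](s') \;=\; \lim_{\gamma\to 1} J_\mu(\pi),
\end{align}
which is the statement of Theorem~\ref{th:equiv_value}.

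I do not anticipate a substantive obstacle: once Lemma~\ref{th:disc_limit_distr} and Corollary~\ref{lemma:s0_indep} are available, the argument is essentially bookkeeping. The only points that need care are (i) the interchange of summation order that produces the visitation-against-reward representation, justified by absolute convergence under the boundedness of $r$ and finiteness of $\mathcal{S}$, and (ii) the interchange of the $\gamma\to 1$ limit with the finite sum over states, which is immediate for the same reason. Everything else is an application of the two preceding results.
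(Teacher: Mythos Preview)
Your proposal is correct and follows essentially the same route as the paper: both arguments rewrite $J_\nu(\pi)$ as an inner product of the expected one-step reward $r_\pi$ against the discounted visitation $(1-\gamma)\sum_t \gamma^t (P_\pi)^t \nu$, then invoke Lemma~\ref{th:disc_limit_distr} and Corollary~\ref{lemma:s0_indep} so that the limiting visitation is the unique stationary distribution, independent of the starting law. The only cosmetic difference is that the paper subtracts the two objectives first and shows the difference vanishes via $\sum_s(d_b(s)-\mu(s))=0$, whereas you compute each limit separately and equate them; the underlying mechanism is identical.
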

\begin{proof}
    We start by writing down the difference between the two objectives using their definitions.
    \begin{align}
        J_{d_b}(\pi) - J_\mu(\pi) &= (1-\gamma)\sum_s d_b(s) V_\pi(s) - \sum_s  \mu(s) V_\pi(s)\\
        &= (1-\gamma)\sum_s \left(d_b(s) - \mu(s)\right) V_\pi(s)\\
        &= (1-\gamma)\sum_s \left[\left(d_b(s) - \mu(s)\right)  \sum_{s'}d_{\pi,u}^{s}(s') \int_{\mathcal{A}} r(s',a') \pi(a'|s')da'\right]\\
        &= \sum_s \left[\left(d_b(s) - \mu(s)\right)  \sum_{s'}\mathbf{e}_{s'}^T(1-\gamma)\sum_{t=0}^{\infty}\gamma^t(P_\pi)^t \mathbf{e}_s \int_{\mathcal{A}} r(s',a') \pi(a'|s')da'\right] \,,
    \end{align}
    where $\mathbf{e}_s$ is the column vector with all zeros except in the element corresponding to state $s$, and $d_{\pi,u}^s(s')=\sum_{t=0}^{\infty}\gamma^t Pr(s_t=s'|s_0=s)$ is the un-normalized state visitation for state $s'$ when starting at state $s$.\\\\
    After taking the limit for $\gamma\rightarrow1$ on both sides we apply  Corollary~\ref{lemma:s0_indep} to change the initial state distribution to an arbitrary distribution $\nu\in\Delta(\mathcal{S})$.
    \begin{align}
        \lim_{\gamma\rightarrow1}\left[J_{d_b}(\pi) - J_\mu(\pi) \right]&= \lim_{\gamma\rightarrow1}\sum_s \left[\left(d_b(s) - \mu(s)\right)  \sum_{s'}\mathbf{e}_{s'}^T(1-\gamma)\sum_{t=0}^{\infty}\gamma^t(P_\pi)^t \mathbf{e}_s \int_{\mathcal{A}} r(s',a') \pi(a'|s')da'\right]\\
       (\text{Corollary~\ref{lemma:s0_indep}})\;\;\; &= \sum_s \left[d_b(s) - \mu(s) \right] \sum_{s'}\mathbf{e}_{s'}^TP_\pi^*\, \nu \int_{\mathcal{A}} r(s',a') \pi(a'|s')da'\\      
        &= \left[\sum_s d_b(s) - \sum_s\mu(s) \right] \sum_{s'}\mathbf{e}_{s'}^TP_\pi^*\, \nu \int_{\mathcal{A}} r(s',a') \pi(a'|s') da'\\
        &= 0
    \end{align}
    Through the uniqueness of the stationary distribution, Corollary~\ref{lemma:s0_indep} allows to remove the dependence on the starting state $s$, hence the value function is naturally independent of the initial state distribution making the two objectives equal.\\
\end{proof}
\subsection{Proof Theorem~\ref{th:grad_ub}}
\begin{theorem*}
    (Upper bound for gradient $L_p$-distance)\\
    For an MDP with finite states, compact action space $\mathcal{A}\subseteq [a_l, a_u]^k$, and bounded p-norm $\left\lVert\nabla_\theta\pi_\theta(a|s) \right\rVert_p\; \forall (s,a)\in\mathcal{S}\times\mathcal{A}$, we have
    \begin{align}
        \left\lVert \nabla_\theta J_{d_b} (\pi_\theta) - \nabla_\theta J_\mu (\pi_\theta)\right\rVert_p \leq 2C(a_u - a_l)^k|\mathcal{S}|^{3/2} d_{TV}(d_b||\mu) \,,
    \end{align}
    where $C = \max_{(s,a)}\left\lVert\nabla_\theta\pi_\theta(a|s) \right\rVert_p$.
\end{theorem*}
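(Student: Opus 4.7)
The plan is to reduce the gradient difference to a weighted sum over states and bound the two resulting factors separately using triangle inequalities, a Neumann series expansion, and norm conversions.

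First, I would express both gradients in a common policy-gradient form. By Sutton et al.'s theorem, the on-policy normalized gradient is
\[
\nabla_\theta J_\mu(\pi_\theta) = \sum_s d_\pi^\mu(s)\, g(s), \qquad g(s) := \int_a Q_\pi(s,a)\, \nabla_\theta \pi(a|s)\, da,
\]
where $d_\pi^\mu$ is the normalized discounted state visitation. By Imani et al.'s emphatic formula applied with the normalizing interest function $i \equiv 1-\gamma$, the emphatic weights $m$ reduce to $d_\pi^{d_b} = (1-\gamma)(\mathbb{I}-\gamma P_\pi)^{-1} d_b$, so the off-policy normalized gradient has the same form but with $d_\pi^{d_b}$ in place of $d_\pi^\mu$. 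Subtracting and applying the $L_p$ triangle inequality yields
\[
\bigl\lVert \nabla_\theta J_{d_b} - \nabla_\theta J_\mu \bigr\rVert_p \le \sum_s \bigl| d_\pi^{d_b}(s) - d_\pi^\mu(s) \bigr|\, \lVert g(s)\rVert_p.
\]

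Second, I would bound the two factors. For the visitation difference, $d_\pi^{d_b} - d_\pi^\mu = (1-\gamma)(\mathbb{I}-\gamma P_\pi)^{-1}(d_b - \mu)$; expanding the Neumann series $(\mathbb{I}-\gamma P_\pi)^{-1} = \sum_{t\ge 0} \gamma^t P_\pi^t$ and invoking column-stochasticity of $P_\pi$ (so $\lVert P_\pi^t v\rVert_1 \le \lVert v\rVert_1$), the outer $(1-\gamma)$ cancels the geometric sum and I obtain $\lVert d_\pi^{d_b} - d_\pi^\mu\rVert_1 \le \lVert d_b - \mu\rVert_1 = 2\, d_{TV}(d_b\|\mu)$. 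For the per-state integrand, the assumed bounds $\lVert \nabla_\theta \pi(a|s)\rVert_p \le C$ and $|Q_\pi(s,a)| \le 1/(1-\gamma)$, together with the compact-action bound $\int_a da \le (a_u-a_l)^k$, give $\lVert g(s)\rVert_p \le C(a_u-a_l)^k/(1-\gamma)$.

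The main obstacle is that combining these bounds directly leaves a residual $1/(1-\gamma)$ factor, whereas the claim is $\gamma$-free and carries the factor $|\mathcal{S}|^{3/2}$ instead. I anticipate this is handled by routing the state sum through $L_2$ via Cauchy--Schwarz, $\sum_s |\delta(s)|\,\lVert g(s)\rVert_p \le \lVert\delta\rVert_2 \bigl(\sum_s \lVert g(s)\rVert_p^2\bigr)^{1/2}$, and then accumulating $\sqrt{|\mathcal{S}|}$ factors through norm conversions (e.g.\ $\lVert\delta\rVert_2 \le \lVert\delta\rVert_1$ and $\sum_s \lVert g(s)\rVert_p^2 \le |\mathcal{S}|\max_s\lVert g(s)\rVert_p^2$), or alternatively by exploiting the trivial bound $|d_\pi^{d_b}(s) - d_\pi^\mu(s)| \le 1$ together with $\sum_s \lVert g(s)\rVert_p \le |\mathcal{S}|\max_s \lVert g(s)\rVert_p$. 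The delicate bookkeeping that produces exactly the coefficient $2|\mathcal{S}|^{3/2}$ while the $\gamma$-dependence disappears is the crux; once the correct combination of conversions is set, substituting $\lVert d_b - \mu\rVert_1 = 2\, d_{TV}(d_b\|\mu)$ completes the inequality.
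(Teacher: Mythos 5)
Your setup coincides with the paper's: the same decomposition $\nabla_\theta J_{d_b}-\nabla_\theta J_\mu=\sum_s(m(s)-d_\pi(s))g(s)$ with the emphatic weights collapsing to $(1-\gamma)(\mathbb{I}-\gamma P_\pi)^{-1}d_b$ under the interest function $i\equiv 1-\gamma$, the same triangle/Minkowski step, and the same elementary bounds $|Q_\pi|\le(1-\gamma)^{-1}$, $\lVert\nabla_\theta\pi_\theta(a|s)\rVert_p\le C$, $\int_{\mathcal{A}}da\le(a_u-a_l)^k$. Where you genuinely diverge is in the treatment of the weighting difference: your $\ell_1$-contraction argument ($P_\pi$ is column-stochastic, hence $\lVert(1-\gamma)(\mathbb{I}-\gamma P_\pi)^{-1}(d_b-\mu)\rVert_1\le\lVert d_b-\mu\rVert_1$) is correct and strictly sharper than what the paper does. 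The paper instead applies Cauchy--Schwarz state by state together with the crude entrywise bound $\lVert\mathbf{e}_s^T(P_\pi)^t\rVert_2\le\lVert\mathbf{1}^T\rVert_2=|\mathcal{S}|^{1/2}$, which after summing the geometric series and the outer sum over states gives $\sum_s\lVert\mathbf{e}_s^T(\mathbb{I}-\gamma P_\pi)^{-1}\rVert_2\le(1-\gamma)^{-1}|\mathcal{S}|^{3/2}$; that is the sole origin of the $|\mathcal{S}|^{3/2}$ in the statement. None of the routes you speculate about at the end (Cauchy--Schwarz through $\ell_2$ on the state sum, or $|\delta(s)|\le 1$) is the one used, and none of them removes the $(1-\gamma)^{-1}$.

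The gap you flagged is therefore real and your proposal does not close it. The paper's derivation starts from $\lVert(1-\gamma)\sum_s(m(s)-d_\pi(s))\int_{\mathcal{A}} Q_\pi\nabla_\theta\pi_\theta\,da\rVert_p$, i.e.\ it carries an explicit $(1-\gamma)$ prefactor in front of the difference of the already-normalized weightings. That prefactor cancels the $Q$-bound's $(1-\gamma)^{-1}$ at the step where the expression is reduced to $C(a_u-a_l)^k\sum_s|m(s)-d_\pi(s)|$, and the second factor of $(1-\gamma)$ sitting inside $m-d_\pi=(1-\gamma)(\mathbb{I}-\gamma P_\pi)^{-1}(d_b-\mu)$ then cancels the $(1-\gamma)^{-1}$ coming from the Neumann-series bound above, leaving a $\gamma$-free result. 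Under your (self-consistent) normalization the normalized gradients are exactly $\sum_s d(s)g(s)$, so only one factor of $(1-\gamma)$ is available; it is consumed by your $\ell_1$ contraction, and the residual $(1-\gamma)^{-1}$ from $\lVert g(s)\rVert_p$ has nothing left to cancel it. What your argument actually proves is $\lVert\nabla_\theta J_{d_b}-\nabla_\theta J_\mu\rVert_p\le 2C(a_u-a_l)^k(1-\gamma)^{-1}d_{TV}(d_b||\mu)$, which neither implies nor is implied by the stated bound. To land on the theorem as written you must adopt the paper's accounting with two $(1-\gamma)$ factors and spend one of them on the $|\mathcal{S}|^{3/2}$-producing Cauchy--Schwarz step rather than on the sharper $\ell_1$ contraction; conversely, if you do adopt that accounting, your contraction argument yields the stronger bound $2C(a_u-a_l)^k d_{TV}(d_b||\mu)$ with no $|\mathcal{S}|$ dependence at all. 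This mismatch in the bookkeeping of $(1-\gamma)$ is precisely the ``crux'' you identified, and the proposal leaves it unresolved.
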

\begin{proof}
In our derivations we set the interest function to a constant and for practical reasons we set it to $i(s)=1-\gamma\; \forall s \in \mathcal{S}$.
\begin{align}
    \left\lVert \nabla_\theta J_{d_b} (\pi_\theta) - \nabla_\theta J_\mu (\pi_\theta)\right\rVert_p &= \left\lVert (1-\gamma)\sum_s (m(s) - d_\pi(s))\int_{\mathcal{A}} Q_\pi(s,a)\nabla_\theta\pi_\theta(a|s) da \right\rVert_p\\
    (\text{Minkowski's Inequality}) \;\;\; &\leq (1-\gamma)\sum_s \lVert(m(s) - d_\pi(s))\int_{\mathcal{A}} Q_\pi(s,a)\nabla_\theta\pi_\theta(a|s) da\rVert_p\\
    &= (1-\gamma) \sum_s \left|(m(s) - d_\pi(s))\right| \left\lVert\int_{\mathcal{A}} Q_\pi(s,a)\nabla_\theta\pi_\theta(a|s) da\right\rVert_p\\
    (\text{Minkowski's Inequality}) \;\;\; &\leq (1-\gamma)\sum_s \left|(m(s) - d_\pi(s))\right|\int_{\mathcal{A}} \left|Q_\pi(s,a)\right|\left\lVert\nabla_\theta\pi_\theta(a|s) da \right\rVert_p\\
    &\leq (1-\gamma)\max_{(s,a)} \left[\left|Q_\pi(s,a)\right|\left\lVert\nabla_\theta\pi_\theta(a|s) \right\rVert_p\right](a_u - a_l)^k\sum_s \left|(m(s) - d_\pi(s))\right|\\
    &\leq (1-\gamma)\max_{(s,a)} \left|Q_\pi(s,a)\right|\max_{(s,a)}\left\lVert\nabla_\theta\pi_\theta(a|s) \right\rVert_p (a_u - a_l)^k\sum_s \left|(m(s) - d_\pi(s))\right|\\
    \label{eq:31}
    &\leq C(a_u - a_l)^k\sum_s \left|(m(s) - d_\pi(s))\right|
\end{align}
In the last step, we re-named $C :=\max_{(s,a)}\left\lVert\nabla_\theta\pi_\theta(a|s) \right\rVert_p$ and use the fact that the Q-function is upper bounded by $(1-\gamma)^{-1}$.
We can rewrite Equation~\ref{eq:31} in terms of state distribution vectors $m$ and $d_\pi$, 
\begin{align}
    \sum_s \left|(m(s) - d_\pi(s))\right| = \sum_s \left| \mathbf{e}_s^T (m - d_\pi)\right| \, ,
\end{align}
where $\mathbf{e}_s$ is the zero column vector with only the $s$-th element equal to 1. The vector $d_\pi$ can be written as
\begin{align}
     d_\pi &= (1-\gamma) \sum_{t=0}^{\infty} \left[\gamma^t (P_\pi)^t \right]\mu \\
     &= (1-\gamma) (\mathbb{I} - \gamma P_\pi)^{-1} \mu
\end{align}
where $d_\pi$, and $\mu$ are column vectors. Substituting the definition of $m = (\mathbb{I}-\gamma P_\pi)^{-1}d_b$ and the previous expression for $d_\pi$, we get:
\begin{align}
    \left\lVert \nabla_\theta J_{d_b} (\pi_\theta) - \nabla_\theta J_\mu (\pi_\theta)\right\rVert_p &\leq (1-\gamma)C(a_u - a_l)^k\sum_s\left| \mathbf{e}_s^T (\mathbb{I}- \gamma P_{\pi})^{-1}(d_b - \mu)\right|\\
    (\text{Cauchy-Schwarz Ineq.}) \;\;\; &\leq (1-\gamma)C(a_u - a_l)^k\sum_s\left\lVert \mathbf{e}_s^T (\mathbb{I}- \gamma P_{\pi})^{-1}\right\rVert_2 \left\lVert d_b - \mu\right\rVert_2
\end{align}
At this point, we can upper bound the term $\sum_s\left\lVert \mathbf{e}_s^T (\mathbb{I}- \gamma P_{\pi})^{-1}\right\rVert_2$ thanks to the following derivations
\begin{align}
    \sum_s\left\lVert \mathbf{e}_s^T (\mathbb{I}- \gamma P_{\pi})^{-1}\right\rVert_2 &= \sum_s\left\lVert \mathbf{e}_s^T \sum_{t=0}^{\infty}\gamma^t(P_\pi)^t\right\rVert_2\\
    &= \sum_s\left\lVert \sum_{t=0}^{\infty}\gamma^t\mathbf{e}_s^T(P_\pi)^t\right\rVert_2\\
    &\leq \sum_s\left\lVert\sum_{t=0}^{\infty}\gamma^t\mathbf{1}^T\right\rVert_2\\
    &= \sum_s\sum_{t=0}^{\infty}\gamma^t\left\lVert \mathbf{1}^T\right\rVert_2\\
    &= (1-\gamma)^{-1}|\mathcal{S}|^{1/2}\sum_s 1\\
    &= (1-\gamma)^{-1}|\mathcal{S}|^{3/2}\,.
\end{align}
This upper bound on $\sum_s\left\lVert \mathbf{e}_s^T (\mathbb{I}- \gamma P_{\pi})^{-1}\right\rVert_2$ is not particularly tight but still does the job for the purpose of this work. Plugging the final result in the previous expression we obtain:
\begin{align}
    \left\lVert \nabla_\theta J_{d_b} (\pi_\theta) - \nabla_\theta J_\mu (\pi_\theta)\right\rVert_p&\leq (1-\gamma)C(a_u - a_l)^k(1-\gamma)^{-1}|\mathcal{S}|^{3/2} \left\lVert d_b - \mu\right\rVert_2\\
    &= C(a_u - a_l)^k|\mathcal{S}|^{3/2} \left\lVert d_b - \mu\right\rVert_2\\
    (\lVert x\rVert_2 \leq \lVert x \rVert_1) \;\;\; &\leq C(a_u - a_l)^k|\mathcal{S}|^{3/2} \left\lVert d_b - \mu\right\rVert_1\\
    &= 2C(a_u - a_l)^k|\mathcal{S}|^{3/2} d_{TV}(d_b||\mu)
\end{align}
The last expression proves the theorem.
\end{proof}
\subsection{Proof Theorem~\ref{th:grad_ub_w_ergodicty}}
\begin{theorem*}
    (Upper bound for gradient $L_p$-distance with Unique Stationary Distribution)\\
    For an MDP with finite states, compact action space $\mathcal{A}\subseteq [a_l, a_u]^k$, and bounded p-norm $\left\lVert\nabla_\theta\pi_\theta(a|s) \right\rVert_p\; \forall (s,a)\in\mathcal{S}\times\mathcal{A}$, if the Markov chain $P_\pi\in \mathbb{R}^{|\mathcal{S}|\times|\mathcal{S}|}$ with entries $P_{\pi}(s',s)=\int_{\mathcal{A}} \pi_\theta(a|s)T(s'|s,a) da$ is irreducible and aperiodic then we have
    \begin{align}
        \left\lVert \nabla_\theta J_{d_b} (\pi_\theta) - \nabla_\theta J_\mu (\pi_\theta)\right\rVert_p \leq (1-\gamma)2T_mC(a_u - a_l)^k|\mathcal{S}|^{3/2} d_{TV}(d_b||\mu) \,,
    \end{align}
    where $C = \max_{(s,a)}\left\lVert\nabla_\theta\pi_\theta(a|s) \right\rVert_p$.
\end{theorem*}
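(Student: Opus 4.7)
The plan is to reuse the first half of the proof of Theorem~\ref{th:grad_ub} verbatim and then tighten the bound on the Neumann series $(\mathbb{I}-\gamma P_\pi)^{-1}(d_b - \mu)$ by exploiting the new irreducibility and aperiodicity hypotheses via the strong stationary time. Starting from the emphatic-gradient expression with interest $i(s) = 1-\gamma$, repeated application of Minkowski's inequality, the bounds $|Q_\pi(s,a)| \leq (1-\gamma)^{-1}$ and $\lVert \nabla_\theta \pi_\theta(a|s)\rVert_p \leq C$, and the identity $m - d_\pi = (1-\gamma)(\mathbb{I}-\gamma P_\pi)^{-1}(d_b - \mu)$ deliver, exactly as before,
\begin{align*}
    \left\lVert \nabla_\theta J_{d_b}(\pi_\theta) - \nabla_\theta J_\mu(\pi_\theta)\right\rVert_p \leq (1-\gamma) C (a_u - a_l)^k \sum_s \left| \mathbf{e}_s^T (\mathbb{I}-\gamma P_\pi)^{-1}(d_b - \mu) \right| \,.
\end{align*}

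The new ingredient enters here. Under irreducibility and aperiodicity the Markov chain has a unique stationary distribution $d^*$ (Remark in the excerpt), so by the definition of the strong stationary time $(P_\pi)^t \mu = (P_\pi)^t d_b = d^*$ for every $t \geq T_m$, where $T_m$ is taken as a common strong stationary time for the two starting distributions. Consequently $(P_\pi)^t(d_b - \mu) = 0$ for all $t \geq T_m$, and the Neumann series collapses to a finite sum:
\begin{align*}
    (\mathbb{I} - \gamma P_\pi)^{-1}(d_b - \mu) = \sum_{t=0}^{T_m - 1} \gamma^t (P_\pi)^t (d_b - \mu) \,.
\end{align*}
From here the argument mirrors Theorem~\ref{th:grad_ub}: apply Cauchy-Schwarz to split $\lVert \mathbf{e}_s^T \sum_{t=0}^{T_m-1}\gamma^t (P_\pi)^t\rVert_2$ from $\lVert d_b - \mu\rVert_2$; use the elementwise bound $\mathbf{e}_s^T (P_\pi)^t \leq \mathbf{1}^T$ together with $\gamma^t \leq 1$ so that each row contributes at most $|\mathcal{S}|^{1/2}$; sum the $T_m$ terms to obtain $\sum_s \lVert \cdots \rVert_2 \leq T_m |\mathcal{S}|^{3/2}$; and finally use $\lVert d_b - \mu\rVert_2 \leq \lVert d_b - \mu\rVert_1 = 2 d_{TV}(d_b \| \mu)$. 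Combining these pieces replaces the factor $(1-\gamma)^{-1}|\mathcal{S}|^{3/2}$ appearing in Theorem~\ref{th:grad_ub} by $T_m |\mathcal{S}|^{3/2}$, so the prefactor $(1-\gamma)$ inherited from the emphatic normalization is no longer cancelled by the Neumann bound, and the claimed $(1-\gamma)\,2 T_m C(a_u - a_l)^k |\mathcal{S}|^{3/2} d_{TV}(d_b \| \mu)$ falls out.

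The main obstacle I anticipate is the mild technicality that the strong stationary time in the excerpt is defined with respect to a single starting distribution, whereas the argument needs a single $T_m$ that simultaneously stabilises $\mu$ and $d_b$. This is resolved by taking the maximum of the two individual strong stationary times (both finite by the Diaconis-Fill result cited in the paper); uniqueness of the stationary distribution then forces $(P_\pi)^{T_m} \mu = (P_\pi)^{T_m} d_b = d^*$ and the cancellation genuinely holds. Beyond that, no new analytical machinery is needed: the improvement is purely a consequence of replacing the infinite geometric series $\sum_{t=0}^{\infty} \gamma^t = (1-\gamma)^{-1}$ by a truncated sum of length $T_m$.
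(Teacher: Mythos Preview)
Your proposal is correct and follows essentially the same route as the paper: reduce to $\sum_s|\mathbf e_s^T(\mathbb I-\gamma P_\pi)^{-1}(d_b-\mu)|$ via the Theorem~\ref{th:grad_ub} steps, expand the Neumann series, kill the tail $t\geq T_m$ using that the stationary distribution is reached, and then bound the remaining $T_m$ terms exactly as before. The only cosmetic difference is that the paper phrases the cancellation as ``$P_\pi^*$ has identical columns under irreducibility and aperiodicity, hence $P_\pi^*(d_b-\mu)=0$'', whereas you phrase it as $(P_\pi)^t\mu=(P_\pi)^t d_b=d^*$ for $t\geq T_m$ and explicitly flag (and resolve) the need for a common $T_m$; the paper's identical-columns formulation sidesteps that technicality but is making the same claim.
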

\begin{proof}
The first part of the proof is the same as in Theorem~\ref{th:grad_ub}, that, for the sake of completeness, we repeat it here.
\begin{align}
    \left\lVert \nabla_\theta J_{d_b} (\pi_\theta) - \nabla_\theta J_\mu (\pi_\theta)\right\rVert_p &= \left\lVert (1-\gamma)\sum_s (m(s) - d_\pi(s))\int_a Q_\pi(s,a)\nabla_\theta\pi_\theta(a|s) \right\rVert_p da\\
    (\text{Minkowski's Inequality}) \;\;\; &\leq (1-\gamma)\sum_s \lVert(m(s) - d_\pi(s))\int_a Q_\pi(s,a)\nabla_\theta\pi_\theta(a|s) \rVert_p \, da\\
    &= (1-\gamma) \sum_s \left|(m(s) - d_\pi(s))\right| \left\lVert\int_a Q_\pi(s,a)\nabla_\theta\pi_\theta(a|s) \right\rVert_p da\\
    (\text{Minkowski's Inequality}) \;\;\; &\leq (1-\gamma)\sum_s \left|(m(s) - d_\pi(s))\right|\int_a \left|Q_\pi(s,a)\right|\left\lVert\nabla_\theta\pi_\theta(a|s) \right\rVert_p da\\
    &\leq (1-\gamma)\max_{(s,a)} \left[\left|Q_\pi(s,a)\right|\left\lVert\nabla_\theta\pi_\theta(a|s) \right\rVert_p\right](a_u - a_l)^k\sum_s \left|(m(s) - d_\pi(s))\right|\\
    &\leq (1-\gamma)\max_{(s,a)} \left|Q_\pi(s,a)\right|\max_{(s,a)}\left\lVert\nabla_\theta\pi_\theta(a|s) \right\rVert_p(a_u - a_l)^k\sum_s \left|(m(s) - d_\pi(s))\right|\\
    \label{eq:54}
    &\leq C(a_u - a_l)^k\sum_s \left|(m(s) - d_\pi(s))\right|
\end{align}
In the last step, we re-named $C :=\max_{(s,a)}\left\lVert\nabla_\theta\pi_\theta(a|s) \right\rVert_p$ and use the fact that the Q-function is upper bounded by $(1-\gamma)^{-1}$. We can rewrite Equation~\ref{eq:54} in terms of state distribution vectors $m$ and $d_\pi$, 
\begin{align}
    \sum_s \left|(m(s) - d_\pi(s))\right| = \sum_s \left| \mathbf{e}_s^T (m - d_\pi)\right| \, ,
\end{align}
where $\mathbf{e}_s$ is the zero column vector with only the $s$-th element equal to 1. The vector $d_\pi$ can be written as
\begin{align}
     d_\pi &= (1-\gamma) \sum_{t=0}^{\infty} \left[\gamma^t (P_\pi)^t \right]\mu \\
     &= (1-\gamma) (\mathbb{I} - \gamma P_\pi)^{-1} \mu
\end{align}
where $d_\pi$, and $\mu$ are column vectors. Substituting the definition of $m = (\mathbb{I}-\gamma P_\pi)^{-1}d_b$ and the previous expression for $d_\pi$, we get:
\begin{align}
    C(a_u - a_l)^k\sum_s \left|(m(s) - d_\pi(s))\right| &= (1-\gamma)C(a_u - a_l)^k\sum_s\left| \mathbf{e}_s^T (\mathbb{I}- \gamma P_{\pi})^{-1}(d_b - \mu)\right|
\end{align}
To separate the transitory and stationary parts, we express the term $(\mathbb{I}- \gamma P_{\pi})^{-1}$ as a geometric series.
\begin{align}
    C(a_u - a_l)^k\sum_s \left|(m(s) - d_\pi(s))\right| & = (1-\gamma)C(a_u - a_l)^k\sum_s\left| \mathbf{e}_s^T \left[\sum_{t=0}^\infty\gamma^t (P_\pi)^t\right](d_b - \mu)\right|
\end{align}
To use the additional irreducibility and aperiodicity assumptions we split the sum before and after the strong stationary time $T_m$.
\begin{align}
    \left\lVert \nabla_\theta J_{d_b} (\pi_\theta) - \nabla_\theta J_\mu (\pi_\theta)\right\rVert_p & = C(a_u - a_l)^k\sum_s\left| \mathbf{e}_s^T \left[(1-\gamma)\sum_{t=0}^{T_m-1}\gamma^t (P_\pi)^t + (1-\gamma) \sum_{t=T_m}^{\infty}\gamma^t (P_\pi)^t\right](d_b - \mu)\right|\\
    & = C(a_u - a_l)^k\sum_s\left| \mathbf{e}_s^T \left[(1-\gamma)\sum_{t=0}^{T_m-1}\gamma^t (P_\pi)^t + (1-\gamma) \sum_{t=T_m}^{\infty}\gamma^t P_\pi^*\right](d_b - \mu)\right|\\
    & = C(a_u - a_l)^k\sum_s\left| \mathbf{e}_s^T \left[(1-\gamma)\sum_{t=0}^{T_m-1}\gamma^t (P_\pi)^t + (1-\gamma) \frac{\gamma^{T_m}}{1-\gamma} P_\pi^*\right](d_b - \mu)\right|\\
    & = C(a_u - a_l)^k\sum_s\left| \mathbf{e}_s^T \left[(1-\gamma)\sum_{t=0}^{T_m-1}\gamma^t (P_\pi)^t + \gamma^{T_m} P_\pi^*\right](d_b - \mu)\right|
\end{align}
Because of the uniqueness of the stationary distribution, we know that $P_\pi^*$ is independent on the starting state and therefore has identical columns. This is equivalent to stating that $P_\pi^* d_b=P_\pi^* \mu$ for any $d_b,\mu \in \Delta(\mathcal{S})$, which renders the element $\gamma^{T_m} P_\pi^*(d_b - \mu)$ equal to zero.
\begin{align}
    \left\lVert \nabla_\theta J_{d_b} (\pi_\theta) - \nabla_\theta J_\mu (\pi_\theta)\right\rVert_p & \leq C(a_u - a_l)^k\sum_s\left| \mathbf{e}_s^T \left[(1-\gamma)\sum_{t=0}^{T_m-1}\gamma^t(P_\pi)^t + \gamma^{T_m} P_\pi^*\right](d_b - \mu)\right|\\
    (\gamma^t(P_\pi)^t\leq(P_\pi)^t)\;\;\;&\leq C(a_u - a_l)^k\sum_s\left| \mathbf{e}_s^T \left[(1-\gamma)\sum_{t=0}^{T_m-1}(P_\pi)^t + \gamma^{T_m} P_\pi^*\right](d_b - \mu)\right|\\
    & =(1-\gamma) C(a_u - a_l)^k\sum_s\left| \mathbf{e}_s^T \sum_{t=0}^{T_m-1}(P_\pi)^t (d_b - \mu)\right|\\
    (\text{Cauchy-Schwarz Ineq.}) \;\;\; &\leq(1-\gamma) C(a_u - a_l)^k\sum_s\left\lVert \mathbf{e}_s^T \sum_{t=0}^{T_m-1}(P_\pi)^t\right\rVert_2 \left\lVert d_b - \mu\right\rVert_2\\
     &\leq(1-\gamma) C(a_u - a_l)^k\sum_s\left\lVert \sum_{t=0}^{T_m-1}\mathbf{e}_s^T(P_\pi)^t\right\rVert_2 \left\lVert d_b - \mu\right\rVert_2\\
    &\leq(1-\gamma) C(a_u - a_l)^k\sum_s\left\lVert T_m \mathbf{1}^T \right\rVert_2 \left\lVert d_b - \mu\right\rVert_2\\
    &=(1-\gamma) T_m C(a_u - a_l)^k|\mathcal{S}|^{3/2} \left\lVert d_b - \mu\right\rVert_2\\
    (\lVert x\rVert_2 \leq \lVert x \rVert_1) \;\;\; &\leq (1-\gamma) T_m C(a_u - a_l)^k|\mathcal{S}|^{3/2} \left\lVert d_b - \mu\right\rVert_1\\
    &= (1-\gamma) 2T_m C(a_u - a_l)^k|\mathcal{S}|^{3/2} d_{TV}(d_b||\mu)
\end{align}
\end{proof}
\subsection{Proof Corollary~\ref{cor:grad_equiv}}
\begin{corollary*}
    For an MDP with finite state, , compact action space $\mathcal{A}\subseteq [a_l, a_u]^k$, and bounded p-norm $\left\lVert\nabla_\theta\pi_\theta(a|s) \right\rVert_p\; \forall (s,a)\in\mathcal{S}\times\mathcal{A}$, if the Markov chain $P_\pi\in \mathbb{R}^{|\mathcal{S}|\times|\mathcal{S}|}$ with entries $P_{\pi}(s',s)=\int_{\mathcal{A}} \pi_\theta(a|s)T(s'|s,a) da$ is irreducible and aperiodic, then for $\gamma\rightarrow1$ it holds that
    \begin{align}
        \nabla_\theta J_{d_b} (\pi_\theta) = \nabla_\theta J_\mu(\pi_\theta)
    \end{align}
\end{corollary*}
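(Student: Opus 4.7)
The plan is to invoke Theorem~\ref{th:grad_ub_w_ergodicty} directly and then pass to the limit $\gamma \to 1$ in its upper bound. Since the hypotheses of the corollary (finite $\mathcal{S}$, compact $\mathcal{A}$, bounded $\lVert \nabla_\theta \pi_\theta(a|s) \rVert_p$, and irreducibility plus aperiodicity of $P_\pi$) coincide exactly with those of Theorem~\ref{th:grad_ub_w_ergodicty}, the bound
\begin{align}
\left\lVert \nabla_\theta J_{d_b}(\pi_\theta) - \nabla_\theta J_\mu(\pi_\theta) \right\rVert_p \leq (1-\gamma)\, 2 T_m C (a_u - a_l)^k |\mathcal{S}|^{3/2}\, d_{TV}(d_b || \mu)
\end{align}
is available as a starting point, so no new structural work is required.

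Next I would argue that every factor on the right-hand side other than the explicit $(1-\gamma)$ remains bounded as $\gamma \to 1$. The strong stationary time $T_m$ is a property of the Markov chain $P_\pi$ and the starting distribution alone, so it does not depend on $\gamma$ and is finite by the Diaconis--Fill result cited in the excerpt. The quantities $C$, $a_u - a_l$, and $|\mathcal{S}|$ are manifestly $\gamma$-independent, and although $d_b$ is a discounted state visitation whose definition involves $\gamma$, the total variation distance always satisfies $d_{TV}(d_b || \mu) \leq 1$. Therefore the right-hand side has the form $(1-\gamma) \cdot K$ for a constant $K$ that stays bounded uniformly as $\gamma \to 1$.

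The last step is a squeeze argument: because $\lVert \cdot \rVert_p \geq 0$ and the upper bound tends to zero, the norm of the gradient difference itself tends to zero, which is exactly the claim $\nabla_\theta J_{d_b}(\pi_\theta) = \nabla_\theta J_\mu(\pi_\theta)$ in the limit.

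There is essentially no hard step here, since the analytic heavy lifting was performed in Theorem~\ref{th:grad_ub_w_ergodicty}; the only point that needs explicit justification is the uniform-in-$\gamma$ boundedness of $T_m$ and $d_{TV}(d_b || \mu)$, and both checks are immediate from the definitions. If one wanted to be slightly more careful, one could note that even though $d_b$ varies with $\gamma$, its limit as $\gamma \to 1$ exists by Lemma~\ref{th:disc_limit_distr}, so the TV distance converges as well, but this refinement is not needed since the crude bound $d_{TV} \leq 1$ already suffices to close the argument.
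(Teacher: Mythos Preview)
Your proposal is correct and follows essentially the same route as the paper's own proof: both invoke Theorem~\ref{th:grad_ub_w_ergodicty}, take the limit $\gamma\to 1$ of its upper bound, and conclude via nonnegativity of the norm. If anything you are slightly more careful than the paper in explicitly noting that $d_b$ depends on $\gamma$ and that the crude bound $d_{TV}\le 1$ handles this, whereas the paper simply passes to the limit without comment.
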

\begin{proof}
    The proof straightforwardly follows from taking the limit of the upper bound presented in Theorem~\ref{th:grad_ub_w_ergodicty} as
    \begin{align}
        \lim_{\gamma\rightarrow1}\left\lVert \nabla_\theta J_{d_b} (\pi_\theta) - \nabla_\theta J_\mu (\pi_\theta)\right\rVert_p & \leq \lim_{\gamma\rightarrow1} (1-\gamma) 2T_mC(a_u - a_l)^k|\mathcal{S}|^{3/2} d_{TV}(d_b||\mu) \, ,
    \end{align}
    which allows us to state that for $\gamma\rightarrow1$ then 
    \begin{align}
        \lim_{\gamma\rightarrow1} \left\lVert \nabla_\theta J_{d_b} (\pi_\theta) - \nabla_\theta J_\mu (\pi_\theta)\right\rVert_p \leq 0 \, .
    \end{align}
    The consequence is that for $\gamma\rightarrow1$ it holds that
    \begin{align}
        \nabla_\theta J_{d_b} (\pi_\theta) =  \nabla_\theta J_\mu (\pi_\theta) \, .
    \end{align}
\end{proof}

\section{Appendix B - Experimental Setup}
\label{sec:app_exp_setup}
\subsection{Two states MDP}
\begin{figure}[h!]
    \centering
    \begin{tikzpicture}[roundnode/.style={circle, draw=black, very thick, minimum size=10mm}]
            % Nodes
            \node[roundnode](s0) at (0,0) {$s_0$};
            \node[roundnode](s1) at (3,0) {$s_1$};
            % Lines
            \draw[->] (s0) to[bend right] node[midway,below,inner sep=2pt] {``move"} (s1);
            \draw[<-] (s0) to[bend left] node[midway,above,inner sep=2pt] {``move"} (s1);
            \draw[<-] (s0) to[out=-150,in=150,loop] node[midway,left,inner sep=2pt] {``stay"} ();
            \draw[<-] (s1) to[out=30,in=-30,loop] node[midway,above,inner sep=13pt] {``stay"} ();
    \end{tikzpicture}
    \caption{The environment executes the action the agent selects with probability $q$. Therefore, $Pr(s|\text{``stay",\,s})=Pr(\neg s |\text{``move",\,s})=q$. The reward function is $r(s_1,\,-)=1$ and $r(s_0,\,-)=0$.}
    \label{fig:two_state_mdp}
\end{figure}
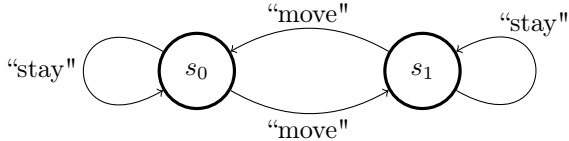
For our empirical experiments, we use the two-state MDP depicted in Figure~\ref{fig:two_state_mdp}, where an agent can either move between states or stay at the current state. The agent is rewarded for staying in state $s_1$ but the environment includes slippage meaning that the environment executes the action the agent selects only with probability $q$ as highlighted in Figure~\ref{fig:two_state_mdp}. Across our experiments the behavioral policy $b$ has been selected to ``stay" with 0.9 probability and the starting state is sampled from a uniform distribution. This environment satisfies all our assumptions having a finite-state space and a strongly connected state-transition graph.

The policies are parametrized by $p$, such that $\pi(``\text{stay}"|s_1)=\pi(``\text{move}"|s_0)=p$, and we sample policies by sampling $p\in[0,1]$ uniformly. Crucially, because of our choice of the policy parametrization, we have that $\nabla_\theta\pi_\theta(a|s)=1$, hence $\left\lVert\nabla_\theta\pi_\theta(a|s) \right\rVert_p$ is bounded for all state-action pairs, matching our assumptions.

Since both objectives rely on estimating at first the per-state value $V_\pi(s)$, we use for all our experiments a tabular representation for the value function estimate and Expected SARSA \citep{sutton2018}, an off-policy variant of the better-known on-policy temporal difference algorithm SARSA. We update the value function estimate one hundred thousand times with 0.5 as the learning rate.

\subsection{DeepMind Control Suite}
\label{subsec:Mujoco}
\begin{figure}[ht]
\centering
\begin{subfigure}{0.32\textwidth}
    \includegraphics[width=\textwidth]{./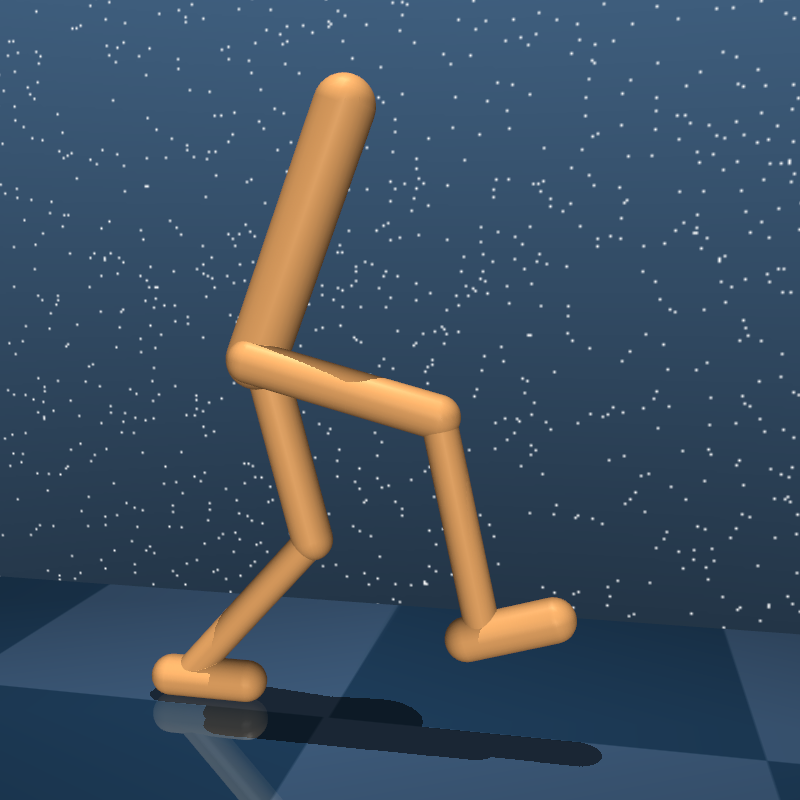}
    \caption{}
\end{subfigure}
\hfill
\begin{subfigure}{0.32\textwidth}
    \includegraphics[width=\textwidth]{./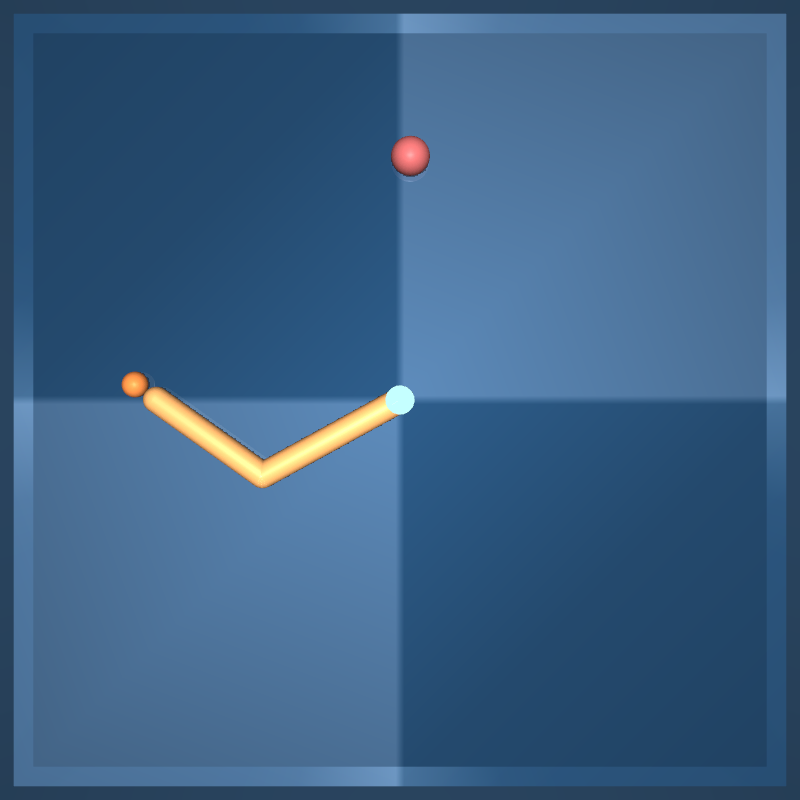}
    \caption{}
\end{subfigure}
\hfill
\begin{subfigure}{0.32\textwidth}
    \includegraphics[width=\textwidth]{./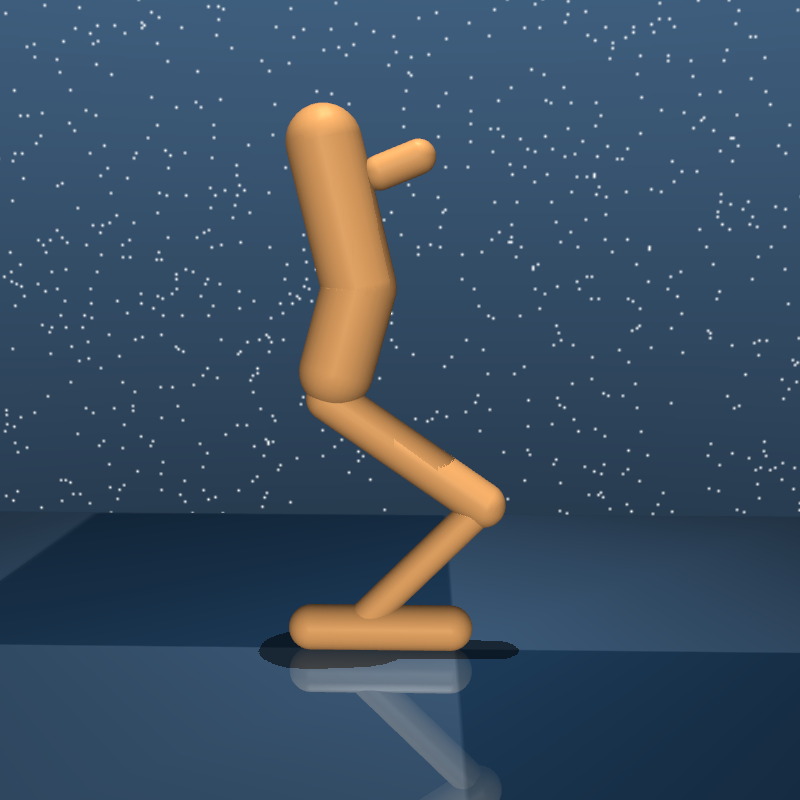}
    \caption{}
\end{subfigure}
\caption{(a) Walker. (b) Reacher. (c) Hopper.}
\label{fig:mujoco_img}
\end{figure}

We use the environments Walker, Hopper, and Reacher (see Figure~\ref{fig:mujoco_img}) for our experiments with the DeepMind control suite \citet{tunyasuvunakool2020}. To have diverse performances among the policies to rank, hence a meaningful ranking, we train for each environment a policy achieving good performances using off-the-shelf RL algorithms. For each environment, we then generate 100 Gaussian policies by sampling uniformly different variance values to and using as mean the action selected by the good-performing policy obtained through training. We then evaluate the true on-policy objective achieved by each policy averaging the returns obtained over 1000 episodes. To compute the excursion objective we run 1000 episodes sampling the starting state from a dataset obtained with a uniformly random policy. This way of computing the excursion objective aligns with our theoretical analysis as we do not consider errors due to estimating off-policy the value function, but only the mismatch in the starting state distribution.

In Figure~\ref{fig:mujoco_exp}, for each value of $\gamma$, we sample 15 policies from the 100 policies we previously generated and compute Kendall's $\tau$ between the ranking generated by the excursion objective and the one generated by the on-policy objective. To obtain the 95\% confidence interval this is repeated 30 times.

Figure~\ref{fig:mujoco_pvalue} is complementary to Figure~\ref{fig:mujoco_exp}, since, instead of showing the empirical 95\% confidence interval of the ranking metric, it displays the $p$-value for the Null hypothesis of the two rankings being independent. Figure~\ref{fig:mujoco_pvalue} allows us to make strong statements about the correlation significance when we get $p$-values below the $0.05$ threshold. On the other hand, Figure~\ref{fig:mujoco_exp} enables us to discuss with high confidence the ranking performances achieved independently from the correlation between the two objectives.

\begin{figure*}[t]
\centering
\includegraphics[width=\textwidth]{./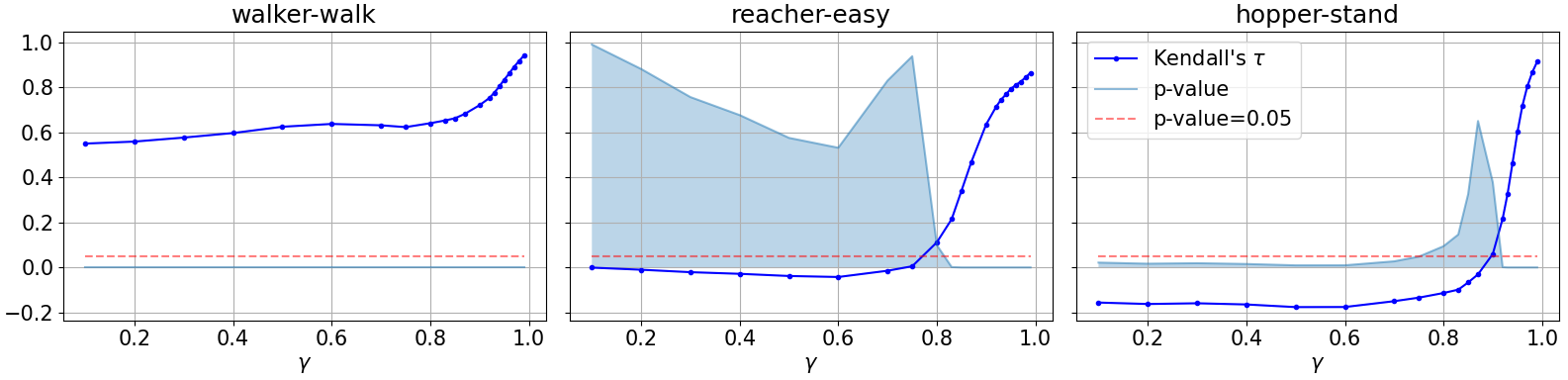}
\caption{Correlation coefficient and $p$-value in offline policy selection across 3 Mujoco environments as a function of $\gamma$ when ranking a set of 100 policies based on the excursion objective, with 1 being the same ranking as ground truth.}
\label{fig:mujoco_pvalue}
\end{figure*}

In Figure~\ref{fig:mujoco_pvalue}, we compute Kendall's $\tau$ and its associated $p$-value when ranking all the previously generated 100 policies. In Walker, the two objectives are significantly and positively correlated across the whole spectrum of $\gamma$, with a correlation coefficient increasing with $\gamma$. In Reacher and Hopper, we can observe that, in accordance with our theoretical results, for $\gamma$ close to 1, there is a significant increase in correlation between the two objectives, providing clear proof that for this regime lower values of $\gamma$ can cause the excursion objective to align less with the on-policy performances. By observing a low significant correlation between the two objectives, Figure~\ref{fig:mujoco_pvalue} shows how misleading can be to optimize the excursion objective even at high $\gamma$ regimes. Importantly, in both Reacher and Hopper, we observe regimes with high $p$-values where there is no evidence of any correlation between the two rankings. For Hopper, for $\gamma<0.7$ there is even a significant anti-correlation between the two rankings.

\textbf{Summary:} Even for continuous action spaces, our experiments show that despite $\gamma\approx0.9$ there can be a low correlation (or no evidence of correlation) between the two objectives. This highlights the danger of using the excursion objective for $\gamma$ not close enough to 1. 

\end{document}